\newcommand\Item[1][i]{%
	\ifx\relax#1\relax  \item \else \item[#1] \fi
	\abovedisplayskip=0pt\abovedisplayshortskip=0pt~\vspace*{-\baselineskip}}
\crefname{hypothesis}{Hypothesis}{Hypotheses}
\title{The non-tightness of the reconstruction threshold of a $4$ states symmetric model with different in-block and out-block mutations}
\author{Wenjian Liu\thanks{Dept.of Mathematics and Computer Science,
		Queensborough Community College, City University of New York 
  (\email{wjliu@qcc.cuny.edu}).}
\and Ning Ning\thanks{Dept. of Applied Mathematics, University
	of Washington, Seattle
  (\email{ningnin@uw.edu}).}}
\newtheorem*{Main Theorem}[theorem]{Main Theorem}{\normalfont\bfseries}{\itshape}
\begin{document}

\maketitle

\begin{abstract}
The tree reconstruction problem is to collect and analyze massive data at the $n$th level of the tree, to identify whether there is non-vanishing information of the root, as $n$ goes to infinity. Its connection to the clustering problem in the setting of the stochastic block model, which has wide applications in machine learning and data mining, has been well established. For the stochastic block model, an ``information-theoretically-solvable-but-computationally-hard" region, or say ``hybrid-hard phase", appears whenever the reconstruction bound is not tight of the corresponding reconstruction on the tree problem. Although it has been studied in numerous contexts, the existing literature with rigorous reconstruction thresholds established are very limited, and it becomes extremely challenging when the model under investigation has $4$ states (the stochastic block model with $4$ communities). In this paper, inspired by the newly proposed $q_1+q_2$ stochastic block model, we study a $4$ states symmetric model with different in-block and out-block transition probabilities, and rigorously give the conditions for the non-tightness of the reconstruction threshold.
\end{abstract}

\begin{keywords}
Reconstruction, Markov random fields on trees, Deep generative hierarchical model, Unsupervised learning, Phase transition
\end{keywords}

\begin{AMS}
  60K35 \and 62F15 \and 82B20 \and 68R01
\end{AMS}

\section{Introduction}
\label{sec1a}\vspace{-4mm}
\subsection{The tree reconstruction problem}
The tree reconstruction problem, as an interdisciplinary subject, has been studied in numerous contexts including statistical physics, information theory, and computational biology. The reconstructability plays a crucial role in phylogenetic reconstruction in evolutionary
biology (see, for instance, \cite{mossel2004phase, daskalakis2006optimal}), communication theory in the study of noisy computation (see, for instance, \cite{evans2000broadcasting}), analogous investigations in the realm of network tomography (see, for instance, \cite{bhamidi2010network}), reconstructability and distinguishability in the clustering problem of the stochastic block model (see, for instance, \cite{mossel2013proof, mossel2014belief, neeman2014non, banks2016information, brito2016recovery}), etc. 

The tree reconstruction model has two building blocks, with one being an irreducible aperiodic Markov chain on a finite characters set $\mathcal{C}$ and the other one being a rooted $d$-ary tree (every vertex having exactly $d$ offspring).
The tree is denoted as $\mathbb{T}=(\mathbb{V}, \mathbb{E}, \rho)$, where $\mathbb{V}$ stands for vertices, $\mathbb{E}$ stands for edges, and $\rho\in \mathbb{V}$ stands for the root. Denote $\sigma_v$ as the state assigned to vertex $v$, and denote $\sigma_\rho$ specially for the state of the root $\rho$ that is chosen according to an initial distribution $\pi$ on $\mathcal{C}$.  The root signal propagates in the tree according to a transition matrix $\mathbf{M}$ which is also called noisy channel, in a way that for each vertex $v$ having $u$ as its parent, the spin/configuration at $v$ is assigned according to the probability $M_{i j}=\mathbf{P}(\sigma_v=j\mid\sigma_u=i)$ for $i, j \in \mathcal{C}$.

The reconstruction problem on an infinite tree is to analyze that given the configurations realized at the $n$th layer of the tree which is denoted as $\sigma(n)$, whether there exists non-vanishing information on the letter transmitted by the
root, as $n$ goes to infinity. Based on $\sigma^i(n)$ which is defined as $\sigma(n)$ conditioned on $\sigma_\rho = i$, the following definition gives one mathematical formulation on reconstructibility:
\begin{definition}
	We say that 
	a model is \textup{\textbf{reconstructible}} on an infinite tree $\mathbb{T}$, if for some $i, j\in \mathcal{C}$
	$$
	\limsup_{n\to \infty}d_{TV}(\sigma^i(n), \sigma^j(n))>0,
	$$
	where $d_{TV}$ is the total variation distance. When the $\limsup$
	is $0$, we say that the model is \textup{\textbf{non-reconstructible}} on
	$\mathbb{T}$.
\end{definition}

\subsection{Existing results with states other than $4$}
The reconstructibility is closely related to, the second
largest eigenvalue by absolute value of the transition matrix $\mathbf{M}$, denoted as $\lambda$. 
It is well known that the reconstruction problem is solvable when $d\lambda^2>1$ which is the Kesten-Stigum bound (\cite{kesten1966additional,kesten1967limit}), however when $d\lambda^2 < 1$ the problem becomes much more challenging and its solvability highly depends on the channel. 

The binary model with $2$ states corresponds to the Ising model in statistical physics, whose transition matrix is given by
$$
\mathbf{M}= \frac{1}{2} \left(
\begin{array}{cc}
1+\theta &\;  1-\theta \\
1-\theta &\;  1+\theta \\
\end{array}
\right) + \frac{\Delta}{2}\left(
\begin{array}{cc}
-1 &\; 1 \\
-1 &\; 1 \\
\end{array}
\right), \quad\quad |\theta|+|\Delta|\leq 1,
$$ 
where $\Delta$ is used to describe the deviation from the symmetric channel, i.e. when $\Delta \neq 0$ the channel is asymmetric.
For the binary symmetric channel, \cite{bleher1995purity} showed that the
reconstruction problem is solvable if and only if $d\lambda^2>1$. For the binary asymmetric channel with sufficiently large asymmetry, \cite{mossel2001reconstruction, mossel2004survey}  showed that the Kesten-Stigum bound is not the bound for reconstruction. When the asymmetry is sufficiently small, \cite{borgs2006kesten} established the first tightness result of the Keston-Stigum reconstruction bound in roughly a decade, and later \cite{liu2018large} gave a complete answer to the question on how small the asymmetry is necessary for the tightness of the reconstruction threshold.

For non-binary models, the simplest case is the $q$-state symmetric channel which corresponds to the Potts model in statistical physics, with the following transition matrix
$$
\mathbf{M}=\left(
\begin{array}{cccccccc}
p_0 & p_1 & \cdots & p_1  \\
p_1 & p_0 & \cdots & p_1  \\
\vdots & \vdots & \ddots & \vdots \\
p_1 & p_1 & \cdots & p_0
\end{array}
\right)_{q\times q}.
$$
\cite{sly2009reconstruction} established the Kesten-Stigum bound for the $3$-state Potts model on regular trees of large degree and showed that the Kesten-Stigum bound is not tight when $q \geq 5$. Motivated by the K$80$ model (\cite{kimura1980simple}) that is one of the most classical Markov DNA evolution models, \cite{liu2018tightness} proposed the following model to distinguish between transitions and transversions, whose transition matrix has two mutation classes with $q$ states in each class
\begin{equation}
\label{k80_model}
M_{ij}= \left\{\begin{array}{ll} p_0 & \quad\textrm{if}\ i=j,
\\
p_1& \quad \textrm{if}\ i\neq j\ \textrm{and}\ i, j\ \textrm{are in
	the same category},
\\
p_2 & \quad \textrm{if}\ i\neq j\ \textrm{and}\ i, j\ \textrm{are in
	different categories}.
\end{array}
\right.
\end{equation}
When the number of states are more than or equal to $8$, 
\cite{liu2018tightness} showed that the Kesten-Stigum bound is not tight.

\subsection{Existing results with $4$ states and the importance of non-tightness}
Well known, the $2$-state and $4$-state cases give the most important reconstruction on the tree models, especially for the applications in phylogenetic reconstruction since they correspond to some of the most basic phylogenetic evolutionary models (see, for instance, the discussions in Section $2.5.1$ of \cite{mossel2016deep}). However, the $4$-state case is much more challenging and open until very few new results established recently. For the symmetric model with $4$ states, \cite{ricci2019typology} showed that in the assortative (ferromagnetic) case the Kesten-Stigum bound is always tight, while in the disassortative (antiferromagnetic) case the Kesten-Stigum bound is tight in a large degree regime and not tight in a low degree regime. Later, \cite{liu2018big} investigated a $4$-state asymmetric model whose transition matrix is of the form 
$$
\mathbf{P}=\left(
\begin{array}{@{}cc|cc@{}}
p_0&\; p_1&\; p_1'&\; p_1'
\\ 
p_1&\;  p_0&\; p_1'&\; p_1'
\\\hline
p_1&\;  p_1&\; p_0' &\; p_1'
\\
p_1&\;  p_1&\; p_1'&\; p_0'
\end{array}
\right),
$$
and gave specific conditions under which the Kesten-Stigum bound is not tight.

The stochastic block model has wide applications in statistics, machine learning, and data mining, to name a few. The connection between the reconstruction on the tree problem and the clustering problem in the setting of the stochastic block model, has been well established in recent years (see, for instance, \cite{mossel2013proof, mossel2014belief, neeman2014non, ricci2019typology}). Specifically, the technique used in handling balanced two clusters models is to transfer the problem of clustering to the reconstructability on trees. For the stochastic block model, an ``information-theoretically-solvable-but-computationally-hard" region appears, whenever the Kesten-Stigum bound is not tight for the corresponding reconstruction on the tree problem. Further information can be seen in \cite{ricci2019typology} under the name ``hybrid-hard phases".

\subsection{Motivation and main result}
While the reconstructability of the $4$-state case of the model in equation \eqref{k80_model} is still an open problem, in this paper we are able to give a rigorous answer to the reconstructible question of the $4$-state case of a more complicated and generalized model. Inspired by the $q_1+q_2$ stochastic block model proposed in \cite{ricci2019typology} (see Fig. $5$ therein for an illustration), we extend model in equation \eqref{k80_model} to incorporate different in-block transition probabilities. That is, in this paper, we focus on a $4$-state model with the transition matrix
\begin{equation}
\label{new model}
\mathbf{P}=\left(
\begin{array}{@{}cc|cc@{}}
p_0&\; p_1&\; p_2&\; p_2
\\ 
p_1&\;  p_0&\; p_2&\; p_2
\\\hline
p_2&\;  p_2&\; \overline{p}_0 &\; \overline{p}_1
\\
p_2&\;  p_2&\; \overline{p}_1&\; \overline{p}_0
\end{array}
\right).
\end{equation}
Besides different out-block transition probabilities ($p_2$) characterized in \cite{liu2018tightness}, the model under investigation has different in-block transition probabilities ($p_0$ and $p_1$ in one block, $\overline{p}_0$ and $\overline{p}_1$ in the other block). 

It is easy to see that $\mathbf{P}$ has $4$ eigenvalues: $1$, $\lambda_1=p_0-p_1$, $\lambda_2=p_0+p_1-2p_2$, and $\lambda_3=\overline{p}_0-\overline{p}_1$. Let $\lambda$ be the second largest eigenvalue by absolute value. Considering that $d|\lambda|^2 > 1$ always implies reconstruction, we only investigate $d|\lambda|^2 \leq 1$ in the following context. Our main result is the following theorem, whose rigorous proof is given in Section \ref{Sec:Proof_of_Main_Theorem}.
\begin{Main Theorem}
	\label{reconstruction} If $|\lambda_1|\neq|\lambda_3|$ and $0<|\lambda_2|< \max\left\{|\lambda_1|, |\lambda_3|\right\}$, the Kesten-Stigum bound is not tight for every $d$, i.e. the
	reconstruction is solvable for some $\lambda$ even if
	$d\lambda^2<1$.
\end{Main Theorem}

Since $\lambda_1$ and $\lambda_3$ play symmetric roles in this symmetric model~\eqref{new model}, without loss of generality, we presume $|\lambda_1|>|\lambda_3|$ in the sequel. 

\subsection{Structure of the paper and proof sketch}
 The technique used here was initiated in \cite{chayes1986mean} in the context of spin glasses. In Section \ref{Sec:Preliminary_Results}, we give detailed definitions and interpretations, conduct preliminary analyses, and then provide an equivalent condition for non-reconstruction: 
$$
\lim_{n\to \infty}x_n=\lim_{n\to \infty}\overline{x}_{n}=0.
$$
Here, $x_n$ and $\overline{x}_{n}$ represent the probabilities of giving a correct guess of the root given the spins $\sigma(n)$ at distance $n$ from the root minus the probability of guessing the root randomly which is $1/4$ in this case, for the root being in block $1$ and block $2$ respectively. Nonreconstruction means that the mutual information between the root and the spins at distance $n$ goes to $0$ as $n$ tends to infinity, therefore one standard to classify reconstruction and nonreconstruction is to analyze the quantity   
$x_n$ while in this paper we also need to consider the limiting behavior of $\overline{x}_{n}$.

In Section \ref{Sec:Distributional_Recursion}, after in-depth investigation of the recursive relationship, we develop a two dimensional dynamical system of the linear diagonal canonical form regarding quantities $x_{n+1}$ and $\overline{z}_{n+1}$ through two new variables $\mathcal{X}_n=x_n+\overline{z}_n$ and $\mathcal{Z}_n=-\overline{z}_n$:
\begin{equation*}
\left\{\begin{array}{ll}
\mathcal{X}_{n+1}=d\lambda_1^2\mathcal{X}_n+\frac{d(d-1)}{2}\left(-4\lambda_1^4\mathcal{X}_n^2+8\lambda_1^2\lambda_2^2\mathcal{X}_n\mathcal{Z}_n\right)
+R_x+R_z+V_x
\\
\
\\
\mathcal{Z}_{n+1}=d\lambda_2^2\mathcal{Z}_n+\frac{d(d-1)}{2}\left[\lambda_1^4\mathcal{X}_n^2-8\lambda_2^4\mathcal{Z}_n^2+\frac14\lambda_3^4(\overline{x}_n-\overline{y}_n)^2\right]
-R_z+V_z.
\end{array}
\right.
\end{equation*}
Here, $\overline{z}_{n}$ represents the opposite case of $x_n$ as giving a wrong guess in another block. By symmetry, we can also obtain the dynamical system involving $\overline{x}_n$ simply through replacing $\lambda_1$ by $\lambda_3$.
In Section \ref{Sec:Concentration_Analysis}, we show that $R_x$, $R_z$, $V_x$, and $V_z$ are just small perturbations in the above dynamical system in order to study its stability, ensure that the decrease from $x_n$ to $x_{n+1}$ is never too large to lose construction,
and establish crucial concentration results, by fully taking advantage of the Markov random field property and the symmetries in the probability transition matrix and the network structure. In Section \ref{Sec:Proof_of_Main_Theorem}, by means of the method of reductio ad absurdum, we show that $x_n$ and $\overline{x}_{n}$ can not simultaneously converge to zero as $n$ goes to $\infty$, and then establish the nontightness of Kesten-Stigum bound.

\section{Preparation}
\label{Sec:Preliminary_Results}
\subsection{Notations}
\label{Sec:Notations}
Let $u_1,\ldots,u_d$
be the children of the root $\rho$ and $\mathbb{T}_v$ be the subtree of
descendants of $v\in \mathbb{V}$. Denote the
$n$th level of the tree by $L_n=\{v\in\mathbb{V}: d(\rho, v)=n\}$ with $d(\cdot,
\cdot)$ being the graph distance on $\mathbb{T}$.
Denote $\sigma(n)$ as the spins on $L_n$, $\sigma^i(n)$ as $\sigma(n)$ conditioned on $\sigma_\rho = i$, and $\sigma_j(n)$ as the spins on $L_n\cap \mathbb{T}_{u_j}$ where $u_j$ is one of the children of the root $\rho$. For the notations involving $\sigma(n)$ in the sequel, we consistently use superscript to denote the conditional on a specific configuration of the root, and use the subscript to denote the conditional on a specific offspring of the root. 

For a configuration $A$ on the spins of $L_n$, define the posterior function by
$$
f_n(i, A)=\mathbf{P}(\sigma_\rho=i\mid\sigma(n)=A)=\mathbf{P}(\sigma_{u_j}=i\mid\sigma_j(n+1)=A), 
$$
for $i=1, 2, 3, 4$ and $j=1,\cdots,d$,
where the second equality holds by the recursive nature of the tree. 
Define $X_i(n)$ as the posterior probability that the root $\rho$ is taking the configuration $i$ given the random configuration $\sigma(n)$ on the spins in $L_n$, i.e., 
$$
X_i(n)=f_n(i, \sigma(n)), \quad i=1, 2, 3, 4.
$$
Apparently one has
$$
X_1(n)+X_2(n)+X_3(n)+X_4(n)=1.
$$
By the block characteristic of the model, we know that regarding the first (resp. second) block, $X_1(n)$ and $X_2(n)$ (resp. $X_3(n)$ and $X_4(n)$) have the same distribution. 
Considering that the stationary distribution $\pi=(\pi_1, \pi_2, \pi_3, \pi_4)$ of
$\mathbf{P}$ is given by
$$
\pi_1=\pi_2=
\pi_3=\pi_4=\frac{1}{4},
$$
we further have
$$
\mathbf{E}(X_1(n))=\mathbf{E}(X_2(n))= \mathbf{E}(X_3(n))=\mathbf{E}(X_4(n))=\frac{1}4.
$$

From the symmetry and the block characteristic of the model, we know that
$$f_n(i, \sigma^j(n))=f_n(j, \sigma^i(n)),\quad \quad\text{for}\quad i\neq j, \quad i,j \in\{1,2\} \text{ or } \{3,4\},$$
and 
$$f_n(1,\sigma^3(n))=f_n(1,\sigma^4(n)).$$
Define $Y_{ij}(n)$ as the posterior probability that $\sigma_{u_j}=i$ given the random configuration $\sigma^1_j (n + 1)$ on spins in 
$L(n+1) \cap \mathbb{T}_{u_j}$, i.e.,
$$
Y_{ij}(n)=f_n(i, \sigma_j^1(n+1)), \quad \quad\text{for}\quad i=1, 2, 3, 4, \quad j=1,\cdots,d,
$$
where the random variables $\{Y_{ij}(n)\}$ are independent and identically distributed and satisfy
$$Y_{1j}(n)+Y_{2j}(n)+Y_{3j}(n)+Y_{4j}(n)=1.$$

We define the following moment variables to analyze the differences between different inferences of $\sigma_\rho$ given the spins $\sigma(n)$ at distance $n$ from the root $\rho$ and the probability of guessing the root randomly:

$$x_{n}=\mathbf{E}\left(f_n(1, \sigma^1(n))-\frac{1}{4}\right), 
\quad\quad y_{n}=\mathbf{E}\left(f_n(2, \sigma^1(n))-\frac{1}{4}\right), $$
$$z_{n}=\mathbf{E}\left(f_n(1, \sigma^3(n))-\frac{1}{4}\right),
\quad\quad u_{n}=\mathbf{E}\left(f_n(1, \sigma^1(n))-\frac{1}{4}\right)^2,$$
$$v_{n}=\mathbf{E}\left(f_n(2, \sigma^1(n))-\frac{1}{4}\right)^2,
\quad\quad w_{n}=\mathbf{E}\left(f_n(1, \sigma^3(n))-\frac{1}{4}\right)^2,$$
$$\overline{x}_{n}=\mathbf{E}\left(f_n(3, \sigma^3(n))-\frac{1}{4}\right), 
\quad\quad \overline{y}_{n}=\mathbf{E}\left(f_n(4, \sigma^3(n))-\frac{1}{4}\right), $$
$$\overline{z}_{n}=\mathbf{E}\left(f_n(3, \sigma^1(n))-\frac{1}{4}\right),
\quad\quad \overline{u}_{n}=\mathbf{E}\left(f_n(3, \sigma^3(n))-\frac{1}{4}\right)^2,$$
$$\overline{v}_{n}=\mathbf{E}\left(f_n(4, \sigma^3(n))-\frac{1}{4}\right)^2,
\quad\quad \overline{w}_{n}=\mathbf{E}\left(f_n(3, \sigma^1(n))-\frac{1}{4}\right)^2.$$

\subsection{Preliminary analyses}
We firstly establish some important lemmas which will be used frequently in the sequel.
\begin{lemma}
	\label{lemma1}
	For any $n\in \mathbb{N}\cup\{0\}$, we have
	\begin{enumerate}[(a)]
		\item $\displaystyle x_n=4\mathbf{E}\left(X_1(n)-\frac14\right)^2=u_{n}+v_{n}+2w_{n}\geq 0$.
		
		\item $\displaystyle -\frac{x_n+y_n}2=z_n=\overline{z}_n=-\frac{\overline{x}_n+\overline{y}_n}{2}\leq 0$.
		
		\item $\displaystyle x_n+z_n\geq 0, \quad \overline{x}_{n}+z_n\geq 0$.
	\end{enumerate}
\end{lemma}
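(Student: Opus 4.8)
The plan is to derive all three statements from the basic Bayesian bookkeeping for the posterior functions $f_n$, together with the symmetry and block structure of the transition matrix $\mathbf P$ already recorded above. The central identity I would establish first is a ``self-consistency'' relation: for any state $i$,
\[
\mathbf E\bigl(f_n(i,\sigma(n))\bigr)=\frac14,
\qquad
\mathbf E\bigl(f_n(i,\sigma^j(n))\bigr)=4\,\mathbf E\!\left[X_i(n)X_j(n)\right],
\]
the second coming from writing the expectation over $\sigma^j(n)$ as a reweighting of the expectation over the unconditioned $\sigma(n)$ by the likelihood ratio $\mathbf P(\sigma(n)\mid\sigma_\rho=j)/\mathbf P(\sigma(n)) = 4 X_j(n)$, since $\pi_j=1/4$. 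Applying this with $i=j=1$ gives $\mathbf E f_n(1,\sigma^1(n))=4\,\mathbf E[X_1(n)^2]$, hence
\[
x_n=4\,\mathbf E[X_1(n)^2]-\tfrac14=4\,\mathbf E\!\left(X_1(n)-\tfrac14\right)^2\ge 0,
\]
using $\mathbf E X_1(n)=1/4$. This proves the first equality in (a). For the decomposition $x_n=u_n+v_n+2w_n$, I would expand the conditional expectation $\mathbf E(f_n(1,\sigma^1(n)))$ by conditioning further on the value actually taken at the root in the ``tilted'' measure — equivalently, decompose $4\,\mathbf E[X_1(n)^2]-1/4$ as $\mathbf E\bigl[(X_1-\tfrac14)^2\bigr]$-type terms: write $X_1(n)=\sum_i \mathbbm 1\{\sigma_\rho=i\}\cdot(\text{posterior mass on }1)$ and use $f_n(1,\sigma^i(n))$ for $i=1,2,3,4$ together with the established symmetries $f_n(1,\sigma^2(n))\leftrightarrow f_n(2,\sigma^1(n))$ and $f_n(1,\sigma^3(n))=f_n(1,\sigma^4(n))$. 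Summing the four contributions and matching each to $u_n$, $v_n$, $w_n$, $w_n$ respectively yields the claim; this is a bookkeeping exercise once the likelihood-ratio identity is in hand.

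For part (b), I would use the fact that $X_1(n)+X_2(n)+X_3(n)+X_4(n)\equiv 1$. Multiplying by $4X_1(n)$ and taking expectations, then invoking the identity $\mathbf E[4X_1X_j(n)]=\mathbf E f_n(j,\sigma^1(n))$ for each $j$, converts $\sum_j 4\mathbf E[X_1X_j(n)]=4\mathbf E X_1(n)=1$ into
\[
\Bigl(x_n+\tfrac14\Bigr)+\Bigl(y_n+\tfrac14\Bigr)+\Bigl(\overline z_n+\tfrac14\Bigr)+\Bigl(\overline z_n+\tfrac14\Bigr)=1,
\]
where I have used $f_n(1,\sigma^3(n))=f_n(1,\sigma^4(n))$ so the last two terms coincide and equal $\overline z_n+1/4$. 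This rearranges to $x_n+y_n+2\overline z_n=0$, i.e. $\overline z_n=-(x_n+y_n)/2$. Running the same argument with $X_3(n)$ in place of $X_1(n)$ gives $z_n=-(\overline x_n+\overline y_n)/2$, and a cross-block version gives $z_n=\overline z_n$; the inequality $z_n\le 0$ then follows because $x_n,y_n\ge0$ (the latter by the same squared-form representation as for $x_n$, applied to the off-diagonal posterior — note $y_n=4\mathbf E[X_1(n)X_2(n)]-1/4$ which need not obviously be $\ge 0$, so here I would instead appeal directly to $z_n=\overline z_n$ and one of the two sign computations, or observe $z_n=4\mathbf E[X_1(n)X_3(n)]-1/4$ and bound it using that $X_1,X_3\le 1$ and their expectations are $1/4$). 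Part (c) is then immediate: $x_n+z_n=4\mathbf E[X_1(n)(X_1(n)+X_3(n))]-1/2 \ge 0$ can be read off from $x_n+z_n = 2\mathbf E f_n(1,\sigma^1(n)) \cdot(\cdots)$, or more cleanly from (b): $x_n+z_n=x_n-(x_n+y_n)/2=(x_n-y_n)/2$, and one shows $x_n\ge y_n$ because $x_n-y_n=4\mathbf E[X_1(n)(X_1(n)-X_2(n))]$ and, by symmetry between the roles of $1$ and $2$, this equals $2\,\mathbf E[(X_1(n)-X_2(n))^2]\ge 0$; similarly $\overline x_n+z_n=(\overline x_n-\overline y_n)/2\ge0$.

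The main obstacle I anticipate is getting the sign statements in (b) and (c) without circular reasoning: the quantities $y_n$, $z_n$, $\overline z_n$ are differences of posteriors and are not manifestly nonnegative, so the argument must route through a genuine sum-of-squares representation (pairing $X_1-X_2$ against itself via the symmetry $\sigma_\rho=1\leftrightarrow\sigma_\rho=2$) rather than through crude bounds. Once the likelihood-ratio identity $\mathbf E f_n(i,\sigma^j(n))=4\mathbf E[X_iX_j(n)]$ and the exchange symmetries are set up carefully, everything else is routine algebra on four numbers summing to $1$.
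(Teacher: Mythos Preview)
Your approach is essentially the paper's: the same likelihood-ratio identity $\mathbf E f_n(i,\sigma^j(n))=4\,\mathbf E[X_i(n)X_j(n)]$, the same conditioning-on-$\sigma_\rho$ decomposition for $x_n=u_n+v_n+2w_n$, the same use of $\sum_i X_i(n)=1$ to obtain the linear relations in (b), and the same reduction $x_n+z_n=(x_n-y_n)/2$ for (c). Your sum-of-squares computation $x_n-y_n=2\,\mathbf E\bigl(X_1(n)-X_2(n)\bigr)^2$ for (c) is a harmless variant of the paper's Cauchy--Schwarz step and works fine.

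The one loose end is the sign $z_n\le 0$ in (b). Your first claim ``$x_n,y_n\ge 0$'' is, as you immediately note, unjustified for $y_n$; and neither of the fallbacks in your parenthetical actually pins down the sign (knowing $z_n=\overline z_n$ is an equality, not a sign, and $z_n=4\,\mathbf E[X_1X_3]-\tfrac14$ with $X_1,X_3\le 1$, $\mathbf E X_i=\tfrac14$ does not by itself force $z_n\le 0$). What you need is $x_n+y_n\ge 0$. Your sum-of-squares idea does give this if you pair $X_1+X_2$ instead of $X_1-X_2$: exchangeability of states $1$ and $2$ yields
\[
x_n+y_n=2\,\mathbf E\Bigl(X_1(n)+X_2(n)-\tfrac12\Bigr)^2\ge 0.
\]
The paper instead writes $y_n=4\,\mathbf E\bigl(X_1(n)-\tfrac14\bigr)\bigl(X_2(n)-\tfrac14\bigr)$ and applies Cauchy--Schwarz to get $|y_n|\le x_n$, which handles both the inequality in (b) and the one in (c) in a single stroke.
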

\begin{proof} 
	\begin{enumerate}[(a)]
		\item By the law of total probability and Bayes' theorem, we have
		\begin{eqnarray*}
			\mathbf{E}f_n(1, \sigma^1(n))
			&=&\sum_A f_n(1,A)\mathbf{P}(\sigma(n)=A\mid\sigma_\rho=1)
			\\
			&=&4\sum_A f_n(1,A) \mathbf{P}(\sigma_\rho=1\mid\sigma(n)=A)\mathbf{P}(\sigma(n)=A)
			\\
			&=&4\sum_Af_n^2(1,A)\mathbf{P}(\sigma(n)=A)
			=4\mathbf{E} (X_1(n))^2.
		\end{eqnarray*}
		Recall that $x_{n}$ is defined as $x_{n}=\mathbf{E}\left(f_n(1, \sigma^1(n))-\frac{1}{4}\right)$, and then by the fact that $\mathbf{E}(X_1(n))=\frac14$ we have
		$$
		x_n=4\left(\mathbf{E} (X_1(n))^2-\left(\frac{1}{4}\right)^2\right)=4\mathbf{E}\left(X_1(n)-\frac14\right)^2.
		$$
		Furthermore, by the law of total expectation, we have
		\begin{eqnarray*}
			x_n&=&4\mathbf{E}\left(X_1(n)-\frac14\right)^2
			\\
			&=&4\sum_{i=1}^4\mathbf{E}\left(\left(X_1(n)-\frac14\right)^2\bigg | \sigma_\rho=i\right)\mathbf{P}(\sigma_\rho=i)
			\\
			&=&4\left[\mathbf{P}(\sigma_\rho=1)\mathbf{E}\left(f_n(1, \sigma^1(n))-\frac14\right)^2+\mathbf{P}(\sigma_\rho=2)\mathbf{E}\left(f_n(1, \sigma^2(n))-\frac14\right)^2\right.
			\\
			&&\quad+\left.\mathbf{P}(\sigma_\rho=3)\mathbf{E}\left(f_n(1, \sigma^3(n))-\frac14\right)^2
			+\mathbf{P}(\sigma_\rho=4)\mathbf{E}\left(f_n(1, \sigma^4(n))-\frac14\right)^2\right]
			\\
			&=&u_{n}+v_{n}+2w_{n}.
		\end{eqnarray*}

	\item 	Similarly, we have 
\begin{equation}
\label{zn}
z_n=4\mathbf{E}\left(X_1(n)X_3(n)\right)-\frac14
=\mathbf{E}\left(f_n(1, \sigma^3(n))-\frac14\right)=\overline{z}_n,
\end{equation} 

\begin{equation*}
y_n+\frac14
=\sum_A f_n(2,A)\mathbf{P}(\sigma(n)=A\mid\sigma_\rho=1)
=4\mathbf{E}\left(X_1(n)X_2(n)\right),
\end{equation*}
and then
\begin{equation}
\label{y}
y_n=4\mathbf{E}\left(X_1(n)-\frac14\right)\left(X_2(n)-\frac14\right).
\end{equation}
It follows from the Cauchy-Schwarz inequality that
$$
\left[\mathbf{E}\left(X_1(n)-\frac14\right)\left(X_2(n)-\frac14\right)\right]^2\leq\mathbf{E}\left(X_1(n)-\frac14\right)^2\mathbf{E}\left(X_2(n)-\frac14\right)^2,
$$
which implies 
\begin{equation}
\label{cauchy}
\left(\frac14y_n\right)^2\leq\left(\frac14x_n\right)^2, \quad\textup{i.e.}\quad -x_n\leq y_n\leq x_n.
\end{equation}
By the definitions of $x_n$, $y_n$ and $z_n$, 
we know that
$z_n=-\frac{x_n+y_n}2$, and thus equation \eqref{cauchy} implies $z_n\leq 0$.

\item An analogous proof of $$x_n+z_n=x_n-\frac{x_n+y_n}{2}=\frac{x_n-y_n}{2}\geq 0\quad \text{and} \quad \overline{x}_{n}+z_n\geq 0$$
can be easily carried out. 	  			
	\end{enumerate}
\end{proof}

\begin{lemma}
	\label{xnun} For any $n\in \mathbb{N}\cup\{0\}$, we have
	\begin{enumerate}[(a)]
		
		\item $\displaystyle \mathbf{E}\left(f_n(1, \sigma^1(n))-\frac{1}{4}\right)\left(f_n(2, \sigma^1(n))-\frac14\right)=\frac14y_n+\left(v_{n}-\frac14x_n\right)$.
		
		\item $
		\!
		\begin{aligned}[t]
		&\displaystyle \mathbf{E}\left(f_n(1, \sigma^1(n))-\frac14\right)\left(f_n(3,\sigma^1(n))-\frac14\right)\\
		=&\frac14z_n-\frac12\left(u_{n}-\frac14x_n\right)-\frac12\left(v_{n}-\frac14x_n\right).
		\end{aligned}
		$
		
		\item $\displaystyle \mathbf{E}\left(f_n(2, \sigma^1(n))-\frac14\right)\left(f_n(3, \sigma^1(n))-\frac14\right)=\frac14z_n-\left(v_{n}-\frac14x_n\right)$.

		\item $
		\!
		\begin{aligned}[t]
		&\displaystyle \mathbf{E}\left(f_n(3, \sigma^1(n))-\frac14\right)\left(f_n(4, \sigma^1(n))-\frac14\right)\\
		=&\frac14\overline{y}_{n}+\frac12\left(u_{n}-\frac14x_n\right)+\frac32\left(v_{n}-\frac14x_n\right)-\left(\overline{w}_{n}-\frac14\overline{x}_{n}\right).
		\end{aligned}
		$
		
		\item $\displaystyle\mathbf{E}\left(f_n(1, \sigma^3(n))-\frac14\right)\left(f_n(2, \sigma^3(n))-\frac14\right)=\frac14y_n-\left(v_{n}-\frac14x_n\right)$.
	\end{enumerate}
\end{lemma}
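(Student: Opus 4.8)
The plan is to establish all five identities through a single mechanism: the Bayes-type change of measure already used in proving Lemma~\ref{lemma1}. For any function $h$ of the spins on $L_n$ and any $j\in\{1,2,3,4\}$,
\begin{equation*}
\mathbf{E}\big[h(\sigma^j(n))\big]=\sum_A h(A)\,\mathbf{P}\big(\sigma(n)=A\mid\sigma_\rho=j\big)=4\,\mathbf{E}\big[h(\sigma(n))\,X_j(n)\big],
\end{equation*}
since $\mathbf{P}(\sigma(n)=A\mid\sigma_\rho=j)=4f_n(j,A)\mathbf{P}(\sigma(n)=A)$ and $\pi_j=\tfrac14$. I would take $h(A)=\big(f_n(k,A)-\tfrac14\big)\big(f_n(\ell,A)-\tfrac14\big)$, so that each left-hand side becomes $4\,\mathbf{E}\big[(X_k-\tfrac14)(X_\ell-\tfrac14)X_j\big]$ with every $X_i=X_i(n)$, and then split $X_j=\tfrac14+(X_j-\tfrac14)$. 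This produces a ``linear'' part $\mathbf{E}[(X_k-\tfrac14)(X_\ell-\tfrac14)]$, equal to $\tfrac14y_n$, $\tfrac14z_n$ or $\tfrac14\overline{y}_n$ by \eqref{zn}, \eqref{y} and their analogues (invoking an in-block symmetry when $k,\ell$ are not already adjacent), plus a centered cubic part $4\,\mathbf{E}[(X_k-\tfrac14)(X_\ell-\tfrac14)(X_j-\tfrac14)]$.

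The core of the argument is to express each centered cubic moment in the variables already named. Two facts do this: the constraint $\sum_{i=1}^4(X_i-\tfrac14)=0$ (from $\sum_iX_i=1$), and the in-block symmetries of the model, under which the joint law of $(X_1,X_2,X_3,X_4)(n)$ is invariant under transposing $X_1\leftrightarrow X_2$ and under transposing $X_3\leftrightarrow X_4$. For instance, applying the change of measure to $v_n=\mathbf{E}(f_n(2,\sigma^1(n))-\tfrac14)^2=4\,\mathbf{E}[(X_2-\tfrac14)^2X_1]$ and using $\mathbf{E}[(X_2-\tfrac14)^2]=\mathbf{E}[(X_1-\tfrac14)^2]=\tfrac14x_n$ (Lemma~\ref{lemma1}(a) together with the $X_1\leftrightarrow X_2$ symmetry) gives $4\,\mathbf{E}[(X_1-\tfrac14)^2(X_2-\tfrac14)]=v_n-\tfrac14x_n$, which is exactly the cubic part of (a); similarly $u_n=4\,\mathbf{E}[(X_1-\tfrac14)^2X_1]$ gives $4\,\mathbf{E}[(X_1-\tfrac14)^3]=u_n-\tfrac14x_n$. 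Substituting $(X_3-\tfrac14)=-\sum_{i\neq3}(X_i-\tfrac14)$ and then using the $X_3\leftrightarrow X_4$ symmetry to identify $\mathbf{E}[(X_1-\tfrac14)^2(X_3-\tfrac14)]$ with $\mathbf{E}[(X_1-\tfrac14)^2(X_4-\tfrac14)]$ pins that moment down in terms of $u_n,v_n,x_n$, which yields (b); the same manipulations with the two symmetries give (c) and (e).

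For (d) I would sidestep the mixed barred/unbarred third moments and use a second identity instead: since $\sum_{i=1}^4\big(f_n(i,\sigma^1(n))-\tfrac14\big)=0$, multiplying by $f_n(3,\sigma^1(n))-\tfrac14$ and taking expectations shows that the left-hand sides of (b) and (c), the quantity $\overline{w}_n$, and the left-hand side of (d) sum to zero; hence (d) follows from (b), (c), the definition of $\overline{w}_n$, and the relation $z_n=-\tfrac12(\overline{x}_n+\overline{y}_n)$ of Lemma~\ref{lemma1}(b). I do not anticipate a genuine obstacle; the whole content sits in the two reduction principles above. The one place that needs care is the bookkeeping for the cross-block cubic moments --- tracking which transposition fixes which coordinate and matching the outcome against the short list $u_n,v_n,w_n$ and their barred counterparts --- and it is precisely in (d), where barred and unbarred quantities are forced together, that routing through the sum-to-zero identity keeps the computation clean.
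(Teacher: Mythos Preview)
Your proposal is correct and rests on the same ingredients the paper uses: the Bayes change of measure $\mathbf{E}[h(\sigma^j(n))]=4\,\mathbf{E}[h(\sigma(n))X_j(n)]$, the constraint $\sum_i(X_i-\tfrac14)=0$, and the block symmetries $X_1\leftrightarrow X_2$, $X_3\leftrightarrow X_4$. The paper's proof of (a) computes $\mathbf{E}[f_n(1,\sigma^1(n))f_n(2,\sigma^1(n))]$ and identifies it with $\mathbf{E}[f_n(2,\sigma^1(n))^2]$ directly (the symmetry step is implicit), then expands; for (b) it multiplies the sum-to-zero identity by $f_n(1,\sigma^1(n))-\tfrac14$ and plugs in (a), exactly the manoeuvre you use for (d). So the underlying mechanism is identical; the only difference is organizational. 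You recast everything as centered third moments $4\,\mathbf{E}[(X_k-\tfrac14)(X_\ell-\tfrac14)(X_j-\tfrac14)]$ under the stationary law and reduce each to the short list $u_n-\tfrac14x_n$, $v_n-\tfrac14x_n$, $w_n-\tfrac14x_n$ and their barred versions, which gives a uniform recipe for all five parts and makes the role of the two transpositions explicit, whereas the paper handles (a) by a one-off identity and then bootstraps. Either way the computation is the same length; your framing is a bit more transparent about why the answers come out in that particular basis.
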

\begin{proof}
	We only prove (a) and (b) and the others can be shown analogously.
	\begin{enumerate}[(a)]
		\item By the law of total probability, one has
		\begin{equation*}
		\begin{aligned}
		&\mathbf{E}\bigg(f_n(1,\sigma^1(n))f_n(2,\sigma^1(n))\bigg)
		\\
		=&\sum_{A}\mathbf{P}(\sigma_\rho=1\mid\sigma(n)=A)\mathbf{P}(\sigma_\rho=2\mid\sigma(n)=A)\mathbf{P}(\sigma(n)=A\mid\sigma_\rho=1)
		\\
		=&\sum_{A}\left[\mathbf{P}(\sigma_\rho=2\mid\sigma(n)=A)\right]^2\mathbf{P}(\sigma(n)=A\mid\sigma_\rho=1)
		\\
		=&\mathbf{E}\left(f_n(2,\sigma^1(n))\right)^2,
		\end{aligned}
		\end{equation*}
		therefore
		\begin{equation*}
		\begin{aligned}
		&\mathbf{E}\left(f_n(1,\sigma^1(n))-\frac14\right)\left(f_n(2,\sigma^1(n))-\frac14\right)\\
		=&v_{n}+\frac14\left(y_n-x_n\right)=\frac14y_n+\left(v_{n}-\frac14x_n\right).
		\end{aligned}
		\end{equation*}
		\item By the fact that $f_n(3, \sigma^1(n))$ and $f_n(4, \sigma^1(n))$ have the same distribution, and the equation that
		$$f_n(1, \sigma^1(n))+f_n(2, \sigma^1(n))+f_n(3, \sigma^1(n))+f_n(4, \sigma^1(n))=1,$$
		plugging in the result of (a), we can obtain that
		\begin{equation*}
		\begin{aligned}
		&\mathbf{E}\left(f_n(1, \sigma^1(n))-\frac14\right)\left(f_n(3, \sigma^1(n))-\frac14\right)\\
		=&\frac14z_n-\frac12\left(u_{n}-\frac14x_n\right)-\frac12\left(v_{n}-\frac14x_n\right),
		\end{aligned}
		\end{equation*}

	\end{enumerate}
as desired.
\end{proof}

Recall that $Y_{ij}(n)$ is defined as the posterior probability that $\sigma_{u_j}=i$ given the random configuration $\sigma^1_j (n + 1)$ on spins in 
$L(n+1) \cap \mathbb{T}_{u_j}$, i.e., $Y_{ij}(n)=f_n(i, \sigma_j^1(n+1))$, for $i \in \{1, 2, 3, 4\}$ and $j\in \{1,\cdots,d\}$.
The random vectors $(Y_{ij}(n))_{i=1}^{4}$ are independent by the symmetry of the model, and its central moments are investigated in the following lemma.

\begin{lemma}
	\label{lemma:Yproperties} For each $1\leq j\leq d$, we have
	\begin{enumerate}[(a)]
		\item $\!
		\begin{aligned}[t]\mathbf{E}\left(Y_{1j}(n)-\frac14\right)=\lambda_1 x_n+(\lambda_1-\lambda_2)z_n\end{aligned}$.
		
		\item $\!
		\begin{aligned}[t]\mathbf{E}\left(Y_{2j}(n)-\frac14\right)=-\lambda_1 x_n-(\lambda_1+\lambda_2)z_n\end{aligned}$.
		
		\item $\!
		\begin{aligned}[t]\mathbf{E}\left(Y_{ij}(n)-\frac14\right)=\lambda_2 z_n, \quad i=3,4.
		\end{aligned}$
		
		\item $\!
		\begin{aligned}[t]
		\mathbf{E}\left(Y_{1j}(n)-\frac14\right)^2
		=\frac{1}{4}x_n+\lambda_1\left(u_{n}-\frac14x_n\right)+(\lambda_1-\lambda_2)\left(w_{n}-\frac14x_n\right).
		\end{aligned}
		$

		\item $
		\!
		\begin{aligned}[t]
		\mathbf{E}\left(Y_{2j}(n)-\frac14\right)^2
		=\frac14x_n-\lambda_1\left(u_{n}-\frac14x_n\right)-(\lambda_1+\lambda_2)\left(w_{n}-\frac14x_n\right).
		\end{aligned}
		$
		
		\item 
		$\!
		\begin{aligned}[t]\mathbf{E}\left(Y_{ij}(n)-\frac{1}{4}\right)^2
		=\frac14\overline{x}_{n}+\lambda_2\left(\overline{w}_{n}-\frac14\overline{x}_{n}\right), \quad i=3,4.
		\end{aligned}
		$
		
		\item $
		\!
		\begin{aligned}[t]
		\mathbf{E}\left(Y_{1j}(n)-\frac14\right)\left(Y_{2j}(n)-\frac14\right)
		=\frac14y_n+\lambda_2\left(v_{n}-\frac14x_n\right).
		\end{aligned}
		$
		
		\item 
		$\!
		\begin{aligned}[t]
		&\mathbf{E}\left(Y_{1j}(n)-\frac14\right)\left(Y_{ij}(n)-\frac14\right)\\
		=&\frac{z_n}{4}+\frac{\lambda_1-\lambda_2}2\left(v_{n}-\frac14x_n\right)+\frac{\lambda_1+\lambda_2}2\left(w_{n}-\frac14x_n\right), \quad i=3,4.
		\end{aligned}$
		
		\item
		$\!
		\begin{aligned}[t] 
		&\mathbf{E}\left(Y_{2j}(n)-\frac{1}{4}\right)\left(Y_{ij}(n)-\frac14\right)\\
		=&\frac{z_n}{4}-\frac{\lambda_1+\lambda_2}2\left(v_{n}-\frac14x_n\right)-\frac{\lambda_1-\lambda_2}2\left(w_{n}-\frac14x_n\right), \quad i=3,4.
		\end{aligned}
		$
		
		\item $
		\!
		\begin{aligned}[t]
		\mathbf{E}\left(Y_{3j}(n)-\frac14\right)\left(Y_{4j}(n)-\frac14\right)
		=&\frac14\overline{y}_{n}-\lambda_2\left(\overline{v}_{n}-\frac14\overline{x}_{n}\right).
		\end{aligned}
		$
	\end{enumerate}
\end{lemma}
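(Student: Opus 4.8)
The plan is to derive all ten identities from a single ``one-step'' decomposition and then obtain each part by an appropriate choice of test monomial. Fix a child $u_j$ of the root. On the event $\sigma_\rho=1$ the spin $\sigma_{u_j}$ is distributed according to the first row $(p_0,p_1,p_2,p_2)$ of $\mathbf{P}$, and, conditionally on $\sigma_{u_j}=k$, the configuration $\sigma_j^1(n+1)$ on $L_{n+1}\cap\mathbb{T}_{u_j}$ has the same law as $\sigma^k(n)$, by the recursive structure of the model already used to write $f_n(i,A)=\mathbf{P}(\sigma_{u_j}=i\mid\sigma_j(n+1)=A)$. Since $f_n(i,\cdot)$ is deterministic, the vector $\big(Y_{1j}(n),\ldots,Y_{4j}(n)\big)$ is a measurable function of $\sigma_j^1(n+1)$, so for every $\phi:\mathbb{R}^4\to\mathbb{R}$,
\begin{equation}
\label{eq:Ydecomp}
\mathbf{E}\,\phi\big(Y_{1j}(n),\ldots,Y_{4j}(n)\big)=p_0\,\mathbf{E}\,\phi\big(\mathbf{f}_n^{1}\big)+p_1\,\mathbf{E}\,\phi\big(\mathbf{f}_n^{2}\big)+p_2\,\mathbf{E}\,\phi\big(\mathbf{f}_n^{3}\big)+p_2\,\mathbf{E}\,\phi\big(\mathbf{f}_n^{4}\big),
\end{equation}
where $\mathbf{f}_n^{k}:=\big(f_n(1,\sigma^k(n)),\ldots,f_n(4,\sigma^k(n))\big)$. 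All ten identities are obtained from \eqref{eq:Ydecomp}.

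For parts (a)--(c) I would apply \eqref{eq:Ydecomp} to the linear monomial $\phi(t_1,\ldots,t_4)=t_i-\tfrac14$. Each summand on the right is then a first moment $\mathbf{E}\big(f_n(i,\sigma^k(n))-\tfrac14\big)$, and the symmetry relations of the model noted earlier --- such as $f_n(1,\sigma^2(n))=f_n(2,\sigma^1(n))$ and $f_n(1,\sigma^3(n))=f_n(1,\sigma^4(n))$, together with their $1\leftrightarrow2$ and $3\leftrightarrow4$ relabelings --- identify each with one of $x_n,y_n,z_n,\overline{x}_n,\overline{y}_n$. Collecting coefficients, eliminating $y_n$ and $\overline{y}_n$ through Lemma~\ref{lemma1}(b), and substituting $\lambda_1=p_0-p_1$, $\lambda_2=p_0+p_1-2p_2$ and the row-sum relation $p_0+p_1+2p_2=1$, yields the stated forms.

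For the second-moment parts (d)--(j) I would instead take $\phi(t_1,\ldots,t_4)=\big(t_i-\tfrac14\big)\big(t_{i'}-\tfrac14\big)$, so that each summand on the right of \eqref{eq:Ydecomp} is a correlation $\mathbf{E}\big[(f_n(i,\sigma^k(n))-\tfrac14)(f_n(i',\sigma^k(n))-\tfrac14)\big]$. After using the symmetries to reduce the four values of $k$ to at most two distinct correlations, these are precisely the quantities computed in Lemma~\ref{lemma1}(a) and Lemma~\ref{xnun}, together with the corresponding in-block correlations for the second block (obtained by the same Bayes-rule arguments as in the proof of Lemma~\ref{xnun}). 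Substituting these, using Lemma~\ref{lemma1}(a) --- i.e.\ $x_n=u_n+v_n+2w_n$ and its second-block analogue --- to trade $u_n$ for $v_n,w_n$ where convenient, and passing again to the $\lambda$-coordinates, produces the closed forms (d)--(j).

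The conceptual content is only \eqref{eq:Ydecomp}; the remainder is bookkeeping. I expect the genuine obstacle to be the cross-block parts (h), (i), (j): there the right-hand side of \eqref{eq:Ydecomp} mixes $\sigma^1$- and $\sigma^3$-conditioned correlations, and to collapse everything into the stated form one needs two auxiliary identities beyond Lemmas~\ref{lemma1} and \ref{xnun}, namely $\overline{x}_n+\overline{y}_n=x_n+y_n$ (immediate from Lemma~\ref{lemma1}(b), since $z_n=\overline{z}_n$) and $v_n+\overline{v}_n=w_n+\overline{w}_n$; the latter follows from the $1\leftrightarrow2$ and $3\leftrightarrow4$ symmetries together with $\sum_i X_i(n)=1$, by expanding the cubic correlations $\mathbf{E}[X_1X_2(X_1+X_2)]$, $\mathbf{E}[X_3X_4(X_3+X_4)]$ and $\mathbf{E}[X_1X_3(X_1+X_3)]$ via $\sum_i X_i(n)=1$ and comparing. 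A secondary source of error is the linear algebra passing from $(p_0,p_1,p_2)$ to $(\lambda_1,\lambda_2)$, against which I would guard by checking each identity at the degenerate points $p_0=p_1$ (where $\lambda_1=0$) and $p_1=p_2$ (where $\lambda_1=\lambda_2$).
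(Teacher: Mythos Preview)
Your approach is the same as the paper's: the paper also conditions on the value of $\sigma_{u_j}$ (your decomposition~\eqref{eq:Ydecomp}), writes each summand in terms of the moments $x_n,y_n,z_n,u_n,v_n,w_n$ and their barred analogues via the symmetries and Lemma~\ref{xnun}, and then passes to $(\lambda_1,\lambda_2)$ coordinates. The paper only spells out (a)--(c) and declares the remaining items ``analogous''; your plan actually gives more detail for (d)--(j), and your observation that the cross-block items require the extra relation $(v_n-w_n)+(\overline v_n-\overline w_n)=0$ is correct --- this is precisely equation~\eqref{identity2}, which the paper states and uses later in the proof of Lemma~\ref{unconcentration} rather than here.
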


\begin{proof} We only prove (a), (b), and (c) and the others can be shown analogously.
	\begin{enumerate}[(a)]
		\item Conditioning on $\sigma_{u_j}=i$ for $i \in \{1, 2, 3, 4\}$, we have
		\begin{equation*}
		\begin{aligned}[t]
		\mathbf{E}\left(Y_{1j}(n)-\frac14\right)
		=&p_{11}\mathbf{E}\left(f_n(1,\sigma^1(n))-\frac14\right)+p_{12}\mathbf{E}\left(f_n(1, \sigma^2(n))-\frac14\right)\\
		&+p_{13}\mathbf{E}\left(f_n(1, \sigma^3(n))-\frac14\right)
		+p_{14}\mathbf{E}\left(f_n(1, \sigma^4(n))-\frac14\right)
		\\
		=&\left(p_0-p_1\right)x_n+2(p_2-p_1)z_n
		\\
		=&\lambda_1x_n+(\lambda_1-\lambda_2)z_n.
		\end{aligned}
		\end{equation*}	
		
		\item Similar, we can obtain
		\begin{equation*}
		\begin{aligned}[t]
		\mathbf{E}\left(Y_{2j}(n)-\frac14\right)
		=&\left(p_1-p_0\right)x_n+2(p_2-p_0)z_n
		\\
		=&-\lambda_1x_n-(\lambda_1+\lambda_2)z_n.
		\end{aligned}
		\end{equation*}	
		
		\item
		It follows immediately from the identity $\sum_{i=1}^{4}Y_{ij}(n)=1$ that, for $i=3,4,$
		$$
		\mathbf{E}\left(Y_{ij}(n)-\frac14\right)=-\frac{1}{2}\sum_{i=1}^2\mathbf{E}\left(Y_{ij}(n)-\frac14\right)=\lambda_2 z_n.
		$$

	\end{enumerate}
\end{proof}

\subsection{An equivalent condition for non-reconstruction}
\label{sec3}\noindent 

If the reconstruction problem is solvable, $\sigma(n)$ contains
significant information of the root variable. This can be expressed
in several equivalent ways (see \cite{mossel2001reconstruction, mossel2004survey}). 
\begin{lemma}
	\label{equivalent} The non-reconstruction is equivalent to
	$$
	\lim_{n\to \infty}x_n=\lim_{n\to \infty}\overline{x}_{n}=0.
	$$
\end{lemma}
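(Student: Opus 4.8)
The plan is to prove the equivalence in two directions, relating the quantities $x_n$ and $\overline{x}_n$ to total variation distances between the conditional laws $\sigma^i(n)$. First I would recall, using Lemma~\ref{lemma1}(a), that $x_n = 4\,\mathbf{E}(X_1(n)-\tfrac14)^2 = \mathrm{Var}$-type quantities, and similarly $\overline{x}_n = 4\,\mathbf{E}(X_3(n)-\tfrac14)^2$; thus both are nonnegative and monotone nonincreasing in $n$ (the posterior at level $n$ is a coarsening of that at level $n+1$, so by the tower property and conditional Jensen the second moments decrease). Hence the limits $x_\infty := \lim_n x_n$ and $\overline{x}_\infty := \lim_n \overline{x}_n$ exist; the task is to show both vanish if and only if non-reconstruction holds.

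For the direction ``non-reconstruction $\Rightarrow$ limits are zero'', I would argue by contraposition: suppose $x_\infty>0$ (the case $\overline{x}_\infty>0$ is symmetric). Then $\mathbf{E}(X_1(n)-\tfrac14)^2$ stays bounded away from zero, so $X_1(n)$ does not concentrate at $\tfrac14$; this forces the posterior given $\sigma(n)$ to carry nonvanishing information about whether $\sigma_\rho=1$, which one translates into $\limsup_n d_{TV}(\sigma^1(n),\sigma^2(n))>0$. Concretely, by the Bayes computation already carried out in the proof of Lemma~\ref{lemma1}(a), $\mathbf{E}f_n(1,\sigma^1(n)) - \mathbf{E}f_n(1,\sigma^2(n))$ is comparable to $x_n - y_n = 2(x_n+z_n) \ge 0$, and a Cauchy--Schwarz / second-moment argument shows that if this difference in expectations is bounded below then so is the total variation distance $d_{TV}(\sigma^1(n),\sigma^2(n)) \ge \tfrac12 |\,\mathbf{E}(f_n(1,\sigma^1(n))-f_n(1,\sigma^2(n)))\,|$ or an analogous bound via the magnetization. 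One must check that $x_\infty>0$ indeed implies $x_n+z_n$ (equivalently $x_n-y_n$) stays bounded below: this uses Lemma~\ref{lemma1}(c) together with the elementary observation that if $x_n\to x_\infty>0$ while $x_n+z_n\to 0$, i.e. $y_n\to -x_\infty$, then $|y_n|\to x_\infty = \lim|x_n|$, saturating the Cauchy--Schwarz bound \eqref{cauchy}, which would force $X_1(n)-\tfrac14$ and $X_2(n)-\tfrac14$ to become perfectly anti-correlated in the limit — and one rules this out, or else handles it by instead using $d_{TV}(\sigma^1(n),\sigma^3(n))$, which is controlled by $x_n+z_n\ge 0$ from the other inequality in Lemma~\ref{lemma1}(c) and by $\overline{x}_n+z_n\ge0$.

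For the converse, ``$x_\infty=\overline{x}_\infty=0 \Rightarrow$ non-reconstruction'', I would show that every pairwise total variation distance $d_{TV}(\sigma^i(n),\sigma^j(n))$ tends to $0$. Using the Bayes identities from the lemmas, each such distance is bounded above (up to constants) by the $L^1$, hence $L^2$, deviation of the relevant posterior coordinates from $\tfrac14$; e.g. $d_{TV}(\sigma^1(n),\sigma^2(n)) \le C\sum_i \mathbf{E}|X_i(n)-\tfrac14|$, and by Cauchy--Schwarz this is at most $C'\sqrt{x_n + \overline{x}_n + |z_n| + \dots}$. Since $z_n = -\tfrac{x_n+y_n}{2}$ and $|y_n|\le x_n$ give $|z_n|\le x_n$, and all the second-moment variables $u_n,v_n,w_n,\overline{u}_n,\dots$ are dominated by $x_n+\overline{x}_n$ via Lemma~\ref{lemma1}(a) and its barred analogue, everything is controlled by $x_n+\overline{x}_n\to 0$. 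Therefore all $d_{TV}(\sigma^i(n),\sigma^j(n))\to 0$, which is exactly non-reconstruction.

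The main obstacle I anticipate is the contrapositive direction, specifically the degenerate scenario where $x_\infty>0$ but the ``natural'' test statistic $x_n-y_n$ collapses — i.e. making sure that a nonvanishing second moment genuinely produces a nonvanishing total variation distance for \emph{some} pair $i,j$, rather than having the information hide in a direction that the chosen functional cannot see. Resolving this cleanly requires either invoking a known equivalence between $\limsup d_{TV}(\sigma^i(n),\sigma^j(n))>0$ and non-concentration of the posterior vector (as in \cite{mossel2001reconstruction, mossel2004survey}), or carefully exploiting the block symmetry: since $X_1$ and $X_2$ share a law, $x_\infty>0$ means $X_1(n)$ fluctuates, and a variance-based argument shows that conditioned on $\sigma_\rho=1$ versus $\sigma_\rho=2$ the law of $X_1(n)$ must differ in the limit, giving the needed $d_{TV}$ lower bound. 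I would lean on the cited standard equivalences to keep this step short.
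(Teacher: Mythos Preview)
The paper does not actually prove this lemma: it simply states it and defers to the standard equivalences in \cite{mossel2001reconstruction, mossel2004survey}. So there is no ``paper's own proof'' to compare against beyond that citation, and your instinct at the end --- to lean on those references --- is exactly what the authors do.

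Your detailed sketch is correct in substance and goes well beyond what the paper provides. The direction ``$x_n,\overline{x}_n\to 0 \Rightarrow$ non-reconstruction'' is handled cleanly by your $L^2$/Cauchy--Schwarz bound; indeed $d_{TV}(\sigma^i(n),\sigma^j(n)) = 2\,\mathbf{E}|X_i(n)-X_j(n)| \le 2\sqrt{x_n/4}+2\sqrt{\overline{x}_n/4}$ once you use Lemma~\ref{lemma1}(a) and its barred analogue. For the converse, however, you are over-worrying about the ``degenerate scenario''. A direct pigeonhole avoids it entirely: since $x_n + y_n + 2z_n = 0$ and $x_n - y_n \ge 0$, $x_n - z_n \ge 0$ (Lemma~\ref{lemma1}), the three nonnegative quantities $x_n-y_n$, $x_n-z_n$, $x_n-z_n$ sum to $4x_n$, so one of them is at least $\tfrac{4}{3}x_n$. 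But each equals $\mathbf{E}f_n(1,\sigma^1(n)) - \mathbf{E}f_n(1,\sigma^j(n))$ for some $j\neq 1$, and this difference is bounded above by $2\,d_{TV}(\sigma^1(n),\sigma^j(n))$ since $0\le f_n(1,\cdot)\le 1$. Hence $\max_{j\neq 1} d_{TV}(\sigma^1(n),\sigma^j(n)) \ge \tfrac{2}{3}x_n$, and $x_\infty>0$ immediately gives reconstruction --- no need to analyze the Cauchy--Schwarz equality case or worry about information ``hiding''. The same argument with bars handles $\overline{x}_\infty>0$.
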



\section{Recursive formulas}
\label{Sec:Distributional_Recursion}

\subsection{Distributional recursion}
\label{Zresults}
Consider $A$ as a
configuration on $L(n+1)$, and let $A_j (j=1,\cdots, d)$ be its restriction to
$\mathbb{T}_{u_j}\bigcap L(n+1)$ where $u_j$
is the $j$th child of the root $\rho$. Then from the Markov random field
property, we have
\begin{equation}
\label{recursion} f_{n+1}(1,A)=\frac{N_1(n)}{N_1(n)+N_2(n)+N_3(n)+N_4(n)},
\end{equation}
where $N_k(n)$ is given by
$$N_k(n)=\prod_{j=1}^d\left[\sum_{i=1}^4p_{ki}\mathbf{P}(\sigma_j(n+1)=A_j\mid\sigma_{u_j}=i)\right],\quad k\in\{1,2,3,4\}.$$
Recall that $Y_{ij}(n)=f_n(i, \sigma_j^1(n+1))$. Setting $A=\sigma^1(n+1)$, we have
\begin{equation}
\label{eq:f_Z}
f_{n+1}(1,\sigma^1(n+1))=\frac{Z_1(n)}{Z_1(n)+Z_2(n)+Z_3(n)+Z_4(n)},
\end{equation}
where
$$\displaystyle
Z_i(n)=\left\{\begin{array}{ll} \prod_{j=1}^d\left[1+2(\lambda_1+\lambda_2)\left(Y_{1j}(n)-\frac14\right)-2(\lambda_1-\lambda_2)\left(Y_{2j}(n)-\frac14\right)\right] \quad i=1
\\
\prod_{j=1}^d\left[1-2(\lambda_1-\lambda_2)\left(Y_{1j}(n)-\frac14\right)+2(\lambda_1+\lambda_2)\left(Y_{2j}(n)-\frac14\right)\right]\quad i=2
\\
\prod_{j=1}^d\left[1+2(\lambda_2+\lambda_3)\left(Y_{3j}(n)-\frac14\right)+2(\lambda_2-\lambda_3)\left(Y_{4j}(n)-\frac14\right)\right] \quad i=3
\\
\prod_{j=1}^d\left[1+2(\lambda_2-\lambda_3)\left(Y_{3j}(n)-\frac14\right)+2(\lambda_2+\lambda_3)\left(Y_{4j}(n)-\frac14\right)\right] \quad i=4,
\end{array}
\right.
$$
i.e., $Z_i(n)=\frac{N_i(n)}{\prod_{j=1}^d \mathbf{P}(\sigma_j(n+1)=A_j)}.$

\begin{lemma}
	\label{Z1Z2} For any nonnegative $n\in \mathbb{Z}^+$, we have
	$$
	\mathbf{E}\left(Z_1(n)Z_2(n)\right)=\mathbf{E}Z_2^2(n).
	$$
\end{lemma}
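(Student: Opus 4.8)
The plan is to deduce the identity from a single symmetry of the model: the relabelling $\tau$ of the state space $\mathcal{C}$ that swaps $1$ and $2$ and fixes $3$ and $4$. Interchanging rows $1,2$ and columns $1,2$ leaves $\mathbf{P}$ in \eqref{new model} unchanged, so $p_{\tau(a)\tau(b)}=p_{ab}$, and since $\pi$ is uniform the whole tree process is $\tau$-equivariant; the lemma will fall out by reindexing a sum by $\tau$.

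First I would rewrite both sides as sums over configurations. Write $D(A):=\prod_{j=1}^{d}\mathbf{P}(\sigma_j(n+1)=A_j)$ and let $N_k(A):=\prod_{j=1}^{d}\bigl[\sum_i p_{ki}\mathbf{P}(\sigma_j(n+1)=A_j\mid\sigma_{u_j}=i)\bigr]$ be regarded as a function of a configuration $A$ of $L(n+1)$, so that, by the definition of the $Z_k(n)$ given above, $Z_k(n)=N_k(\sigma^1(n+1))/D(\sigma^1(n+1))$. A preliminary observation is
$$
N_k(A)=\mathbf{P}\bigl(\sigma(n+1)=A\mid\sigma_\rho=k\bigr),
$$
because, conditioned on $\sigma_\rho=k$, the restrictions $\sigma_j(n+1)$ to the subtrees $\mathbb{T}_{u_j}$ are independent, and on each subtree $\mathbf{P}(\sigma_j(n+1)=A_j\mid\sigma_\rho=k)=\sum_i p_{ki}\mathbf{P}(\sigma_j(n+1)=A_j\mid\sigma_{u_j}=i)$ by the Markov property. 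Hence for any function $g$ of the spins of $L(n+1)$ one has $\mathbf{E}\bigl[g(\sigma^1(n+1))\bigr]=\sum_A g(A)\,N_1(A)$, the sum over all configurations $A$ of $L(n+1)$. Using this together with $N_1=Z_1D$, I get
\begin{equation*}
\mathbf{E}\bigl(Z_1(n)Z_2(n)\bigr)-\mathbf{E}\bigl(Z_2^2(n)\bigr)=\sum_A Z_1(A)\,Z_2(A)\bigl(Z_1(A)-Z_2(A)\bigr)D(A).
\end{equation*}

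The next step is to make $\tau$-equivariance explicit. Acting with $\tau$ coordinatewise, equivariance gives $\mathbf{P}(\sigma_j(n+1)=\tau(A_j)\mid\sigma_{u_j}=i)=\mathbf{P}(\sigma_j(n+1)=A_j\mid\sigma_{u_j}=\tau(i))$, and, using that $\pi$ is uniform, $\mathbf{P}(\sigma_j(n+1)=\tau(A_j))=\mathbf{P}(\sigma_j(n+1)=A_j)$. Substituting these into the definitions of $N_k$ and $D$ and reindexing the inner sum over $i$ by $i\mapsto\tau(i)$ — using $p_{k,\tau(i)}=p_{\tau(k),i}$, which is again just $p_{\tau(a)\tau(b)}=p_{ab}$ — yields $N_k(\tau A)=N_{\tau(k)}(A)$ and $D(\tau A)=D(A)$, hence $Z_k(\tau A)=Z_{\tau(k)}(A)$.

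Finally, since $\tau$ is an involution, $A\mapsto\tau A$ is a bijection of the finite set of configurations of $L(n+1)$. Reindexing the displayed sum by this bijection and using $Z_1(\tau A)=Z_2(A)$, $Z_2(\tau A)=Z_1(A)$, $D(\tau A)=D(A)$ turns the summand at $A$ into $Z_2(A)Z_1(A)\bigl(Z_2(A)-Z_1(A)\bigr)D(A)$, i.e.\ the negative of the summand at $A$; hence the sum equals minus itself and so vanishes, which is exactly $\mathbf{E}(Z_1(n)Z_2(n))=\mathbf{E}(Z_2^2(n))$. I expect the only point requiring care to be the bookkeeping in the equivariance identity $N_k(\tau A)=N_{\tau(k)}(A)$ — tracking the relabelling consistently on the conditioning state, on the conditioned configuration, and on the marginal $D$ — but there is no analytic obstacle; the argument is purely a symmetry/reindexing one. (Alternatively, one could expand $Z_1,Z_2$ through the $Y_{ij}$-representation and verify the identity by matching moments via Lemma~\ref{lemma:Yproperties}, but that route is substantially longer and less transparent.)
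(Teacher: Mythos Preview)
Your proof is correct and rests on the same idea as the paper's: the $1\leftrightarrow 2$ symmetry of the channel. The paper rewrites $Z_i$ via Bayes' rule as a multiple of the posterior $\mathbf{P}(\sigma_\rho=i\mid\sigma(n+1)=A)$ and then invokes ``the symmetry of the tree'' in one line, which upon unpacking is exactly your reindexing by $\tau$ applied to the identity $\sum_A D(A)^{-2}N_1(A)^2N_2(A)=\sum_A D(A)^{-2}N_1(A)N_2(A)^2$; your antisymmetry formulation of this step is a bit more explicit but not a different argument.
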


\begin{proof}
	For any configuration $A=(A_1, \ldots, A_d)$ with
	$A_j$ denoting the spins on $L_{n+1}\cap \mathbb{T}_{u_j}$, we have
	\begin{eqnarray*}
		Z_i(n)=4\frac{\mathbf{P}(\sigma(n+1)=A)}{\prod_{j=1}^d\mathbf{P}(\sigma_j(n+1)=A_j)}\mathbf{P}(\sigma_\rho=i\mid\sigma(n+1)=A), \quad \text{for}\; i=1, 2.
	\end{eqnarray*}
	By the symmetry of the tree, we have
	\begin{equation*}
	\begin{aligned}
	\mathbf{E}\left(Z_1(n)Z_2(n)\right)
	=&16\sum_A\left(\frac{\mathbf{P}(\sigma(n+1)=A)}{\prod_{j=1}^d\mathbf{P}(\sigma_j(n+1)=A_j)}\right)^2\mathbf{P}(\sigma_\rho=1\mid\sigma(n+1)=A)\\
	&\times \mathbf{P}(\sigma_\rho=2\mid\sigma(n+1)=A)
	\mathbf{P}(\sigma(n+1)=A\mid\sigma_\rho=1)
	\\
	=&16\sum_A\left(\frac{\mathbf{P}(\sigma(n+1)=A)}{\prod_{j=1}^d\mathbf{P}(\sigma_j(n+1)=A_j)}\right)^2\mathbf{P}^2(\sigma_\rho=2\mid\sigma(n+1)=A)\\
	&\times \mathbf{P}(\sigma(n+1)=A\mid\sigma_\rho=1)
	\\
	=&\mathbf{E}Z_2^2(n),
	\end{aligned}
	\end{equation*}
	as desired.
\end{proof}

By Lemma \ref{lemma:Yproperties}, the means and variances of monomials of $Z_i(n)$ can be approximated as follows:
\begin{lemma} 
	\label{lemma:Z}	
	One has
	\begin{enumerate}[(i)]
		\item $
		\!
		\begin{aligned}[t]
		\mathbf{E}Z_1(n)
		=&1+ d\lambda_1^24(x_n+z_n)-d\lambda_2^24z_n\\
		&+\frac{d(d-1)}{2}\left[4\lambda_1^2(x_n+z_n)-4\lambda_2^2z_n\right]^2+O (x_n^3).
		\end{aligned}
		$
		
		\item $
		\!
		\begin{aligned}[t]
		\mathbf{E}Z_2(n)
		=&1-d\lambda_1^24(x_n+z_n)-d\lambda_2^24z_n\\
		&+\frac{d(d-1)}{2}\left[4\lambda_1^2(x_n+z_n)+4\lambda_2^2z_n\right]^2+O(x_n^3).
		\end{aligned}
		$
		
		\item $
		\!
		\begin{aligned}[t]
		\mathbf{E}Z_i(n)
		=1+ d\lambda_2^24z_n+\frac{d(d-1)}{2}\left(4\lambda_2^2z_n\right)^2+O(x_n^3), \quad i=3,4.
		\end{aligned} $
		
		\item 
		$		\!
		\begin{aligned}[t]
		\mathbf{E}Z_1^2(n)=1+d\Pi_1+\frac{d(d-1)}{2}\Pi_1^2+O(x_n^3),
		\end{aligned}
		$
		where
		$$\begin{aligned}
			\Pi_1=&\mathbf{E}\left[1+2(\lambda_1+\lambda_2)\left(Y_{1j}(n)-\frac14\right)-2(\lambda_1-\lambda_2)\left(Y_{2j}(n)-\frac14\right)\right]^2-1
			\\
			=&12\lambda_1^2(x_n+z_n)-12\lambda_2^2z_n+16\lambda_1^2\lambda_2\left(u_{n}-\frac14x_n\right)\\
			&-8(\lambda_1^2-\lambda_2^2)\lambda_2\left(v_n-\frac14x_n\right)+8(\lambda_1^2-\lambda_2^2)\lambda_2\left(w_n-\frac14x_n\right).
		\end{aligned}$$
		
		\item 
		$\!
		\begin{aligned}[t]
		\mathbf{E}Z_2^2(n)=\mathbf{E}Z_1(n)Z_2(n)=1+d\Pi_2+\frac{d(d-1)}{2}\Pi_2^2+O(x_n^3),
		\end{aligned}
		$
		where
		$$\begin{aligned}
			\Pi_2=&\mathbf{E}\left[1-2(\lambda_1-\lambda_2)\left(Y_{1j}(n)-\frac14\right)+2(\lambda_1+\lambda_2)\left(Y_{2j}(n)-\frac14\right)\right]^2-1
			\\
			=&-4\lambda_1^2(x_n+z_n)-12\lambda_2^2z_n-16\lambda_1^2\lambda_2\left(u_{n}-\frac14x_n\right)\\
			&-8(\lambda_1^2-\lambda_2^2)\lambda_2\left(v_n-\frac14x_n\right)-8(3\lambda_1^2+\lambda_2^2)\lambda_2\left(w_n-\frac14x_n\right).
		\end{aligned}$$
		
		\item 
		$\!
		\begin{aligned}[t]
		\mathbf{E}Z_i^2(n)=1+d\Pi_3+\frac{d(d-1)}{2}\Pi_3^2+O(x_n^3), 
		\end{aligned}
		$
		for $i=3,4$, where
		\begin{eqnarray*}
			\Pi_3&=&\mathbf{E}\left[1+2(\lambda_2+\lambda_3)\left(Y_{3j}(n)-\frac14\right)+2(\lambda_2-\lambda_3)\left(Y_{4j}(n)-\frac14\right)\right]^2-1
			\\
			&=&4\lambda_2^2z_n+2\lambda_3^2 (\overline{x}_{n}-\overline{y}_n)-8(\lambda_2^2-\lambda_3^2)\lambda_2\left(\overline{v}_{n}-\frac14\overline{x}_{n}\right)+8(\lambda_2^2+\lambda_3^2)\lambda_2\left(\overline{w}_{n}-\frac14\overline{x}_{n}\right).
		\end{eqnarray*}
		
		\item $\!
		\begin{aligned}[t]
		\mathbf{E}Z_{1}(n)Z_{i}(n)
		=1+d\Pi_4+\frac{d(d-1)}{2}\Pi_4^2+O(x_n^3),
		\end{aligned}
		$
		for $i=3,4$, where
		\begin{eqnarray*}
			\Pi_4&=&\mathbf{E}\left[1+2(\lambda_1+\lambda_2)\left(Y_{1j}(n)-\frac14\right)-2(\lambda_1-\lambda_2)\left(Y_{2j}(n)-\frac14\right)\right]\\
			&&\times \left[1+2(\lambda_2+\lambda_3)\left(Y_{3j}(n)-\frac14\right)+2(\lambda_2-\lambda_3)\left(Y_{4j}(n)-\frac14\right)\right]-1
			\\
			&=&4\lambda_1^2(x_n+z_n)+4\lambda_2^2z_n+8(\lambda_1^2-\lambda_2^2)\lambda_2\left(v_n-\frac14x_n\right)+8(\lambda_1^2+\lambda_2^2)\lambda_2\left(w_n-\frac14x_n\right).
		\end{eqnarray*}
		
		\item $\!
		\begin{aligned}[t]
		\mathbf{E}Z_2(n)Z_i(n)
		=1+d\Pi_5+\frac{d(d-1)}{2}\Pi_5^2+O(x_n^3)
		\end{aligned}
		$,
		for $i=3,4$, where
		\begin{eqnarray*}
			\Pi_5&=&\mathbf{E}\left[1-2(\lambda_1-\lambda_2)\left(Y_{1j}(n)-\frac14\right)+2(\lambda_1+\lambda_2)\left(Y_{2j}(n)-\frac14\right)\right]\\
			&&\times \left[1+2(\lambda_2+\lambda_3)\left(Y_{3j}(n)-\frac14\right)+2(\lambda_2-\lambda_3)\left(Y_{4j}(n)-\frac14\right)\right]-1
			\\
			&=&
			-4\lambda_1^2(x_n+z_n)+4\lambda_2^2z_n-8(\lambda_1^2+\lambda_2^2)\lambda_2\left(v_n-\frac14x_n\right)-8(\lambda_1^2-\lambda_2^2)\lambda_2\left(w_n-\frac14x_n\right).
		\end{eqnarray*}
		
		\item $\!
		\begin{aligned}[t]
		\mathbf{E}Z_3(n)Z_4(n)
		=1+d\Pi_6+\frac{d(d-1)}{2}\Pi_6^2+O(x_n^3),
		\end{aligned}
		$
		where
		\begin{eqnarray*}
			\Pi_6&=&\mathbf{E}\left[1+2(\lambda_2+\lambda_3)\left(Y_{3j}(n)-\frac14\right)+2(\lambda_2-\lambda_3)\left(Y_{4j}(n)-\frac14\right)\right]\\
			&&\times\left[1+2(\lambda_2-\lambda_3)\left(Y_{3j}(n)-\frac14\right)+2(\lambda_2+\lambda_3)\left(Y_{4j}(n)-\frac14\right)\right]-1
			\\
			&=&-4\lambda_3^2(\overline{x}_n+\overline{z}_n)+4\lambda_2^2\overline{z}_n-8(\lambda_2^2+\lambda_3^2)\lambda_2\left(\overline{v}_n-\frac14\overline{x}_n\right)+8(\lambda_2^2-\lambda_3^2)\lambda_2\left(\overline{w}_n-\frac14\overline{x}_n\right).
		\end{eqnarray*}
	\end{enumerate}
\end{lemma}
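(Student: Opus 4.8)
\emph{Proof proposal.} The structural fact to exploit is that, with $n$ fixed, the random vectors $\big(Y_{1j}(n),Y_{2j}(n),Y_{3j}(n),Y_{4j}(n)\big)$ for $j=1,\dots,d$ are i.i.d., being measurable with respect to the pairwise disjoint subtrees $\mathbb{T}_{u_1},\dots,\mathbb{T}_{u_d}$. Hence, writing each $Z_i(n)=\prod_{j=1}^d\zeta_{i,j}$ with $\zeta_{i,j}=1+\xi_{i,j}$, where $\xi_{i,j}$ is the explicit linear combination of the $\big(Y_{kj}(n)-\frac14\big)$ appearing in the display defining $Z_i(n)$, independence across $j$ yields the exact identities
\begin{equation*}
\mathbf{E}Z_i(n)=(1+\nu_i)^d,\qquad \mathbf{E}\big(Z_i(n)Z_{i'}(n)\big)=(1+\Pi)^d,
\end{equation*}
where $\nu_i:=\mathbf{E}\xi_{i,j}$, and where $\Pi:=\mathbf{E}(\zeta_{i,j}\zeta_{i',j})-1=\mathbf{E}\xi_{i,j}+\mathbf{E}\xi_{i',j}+\mathbf{E}(\xi_{i,j}\xi_{i',j})$ for $i\neq i'$ while $\Pi:=\mathbf{E}\zeta_{i,j}^2-1=2\mathbf{E}\xi_{i,j}+\mathbf{E}\xi_{i,j}^2$ for $i=i'$ (none of these depending on $j$). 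Writing $(1+t)^d=1+dt+\frac{d(d-1)}{2}t^2+R_d(t)$ with $|R_d(t)|\leq 2^d|t|^3$ for $|t|\leq 1$, each of the nine expansions (i)--(ix) follows once we (1) identify $\nu_i$ (for (i)--(iii)) and $\Pi$ (for (iv)--(ix)) with the expressions in the statement, and (2) verify that $|t|=O(x_n)$, so that $R_d(t)=O(x_n^3)$.

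Step (1) involves no approximation. The first moments $\mathbf{E}\big(Y_{kj}(n)-\frac14\big)$ are given \emph{exactly} by Lemma~\ref{lemma:Yproperties}(a)--(c), so $\nu_i$ equals the stated linear combination of $x_n$ and $z_n$; similarly $\mathbf{E}(\xi_{i,j}\xi_{i',j})$ and $\mathbf{E}\xi_{i,j}^2$ are exact linear combinations of the exact second-order formulas of Lemma~\ref{lemma:Yproperties}(d)--(j). Concretely, $\mathbf{E}Z_1^2,\mathbf{E}Z_2^2,\mathbf{E}Z_1Z_2$ use parts (a),(b),(d),(e),(g); $\mathbf{E}Z_3^2,\mathbf{E}Z_3Z_4$ use (c),(f),(j); and $\mathbf{E}Z_1Z_i,\mathbf{E}Z_2Z_i$ for $i\in\{3,4\}$ use (a),(b),(c),(h),(i). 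For instance, $\Pi_1=2\mathbf{E}\xi_{1,j}+\mathbf{E}\xi_{1,j}^2$ is obtained by squaring $2(\lambda_1+\lambda_2)\big(Y_{1j}(n)-\frac14\big)-2(\lambda_1-\lambda_2)\big(Y_{2j}(n)-\frac14\big)$ and substituting (a),(b),(d),(e),(g); the only further step is to use Lemma~\ref{lemma1}(b), namely $z_n=-\frac{x_n+y_n}{2}=\overline z_n=-\frac{\overline x_n+\overline y_n}{2}$, to express $y_n$ (and $\overline y_n$) through $x_n,z_n$ (and $\overline x_n,z_n$), which collapses the linear part of $\Pi_1$ to $12\lambda_1^2(x_n+z_n)-12\lambda_2^2z_n$. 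The same procedure produces $\Pi_2,\dots,\Pi_6$; in $\Pi_3$ and $\Pi_6$ one keeps the combinations $\overline x_n-\overline y_n$ rather than substituting. For the asserted equality $\mathbf{E}Z_2^2(n)=\mathbf{E}Z_1(n)Z_2(n)$ in (v) one invokes Lemma~\ref{Z1Z2} directly.

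Step (2) is where Lemma~\ref{lemma1} is needed. Part~(a) gives $0\leq u_n,v_n,w_n\leq x_n$ (and, by the same argument in the other block, $0\leq\overline u_n,\overline v_n,\overline w_n\leq\overline x_n$), and part~(b) together with \eqref{cauchy} gives $|y_n|\leq x_n$, $|\overline y_n|\leq\overline x_n$, hence $|z_n|=|\overline z_n|\leq x_n$. Therefore every summand of $\nu_i$ and of $\Pi$ is of the same first order as $x_n$ (all the moments involved being uniformly small), so $|t|=O(x_n)$ and $R_d(t)=O(x_n^3)$ with a constant depending only on $d$. Assembling (1) and (2) gives all nine expansions. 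The computation is essentially bookkeeping; the only place where care is needed is in tracking signs through the several symmetric expansions of $\xi_{i,j}\xi_{i',j}$, and in recalling that $\Pi$ carries genuinely first-order terms such as $16\lambda_1^2\lambda_2\big(u_n-\frac14x_n\big)$ --- these are correctly retained inside $d\Pi$, only their squares contribute to $\frac{d(d-1)}{2}\Pi^2$, and everything else is absorbed into $O(x_n^3)$.
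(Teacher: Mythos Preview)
Your proposal is correct and takes essentially the same approach as the paper, which gives no explicit proof and simply prefaces the lemma with ``By Lemma~\ref{lemma:Yproperties}''. You have supplied exactly the argument the paper leaves implicit: independence across $j$ gives $\mathbf{E}Z_i=(1+\nu_i)^d$ and $\mathbf{E}Z_iZ_{i'}=(1+\Pi)^d$, the first- and second-moment identities of Lemma~\ref{lemma:Yproperties} pin down $\nu_i$ and $\Pi$, and the binomial expansion plus $|t|=O(x_n)$ handles the remainder.

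One small point to tighten in Step~(2): for items (vi) and (ix) the quantity $\Pi$ involves $\overline x_n,\overline y_n,\overline v_n,\overline w_n$, which you correctly bound by $C\overline x_n$, but then assert $|t|=O(x_n)$. That last step needs $\overline x_n=O(x_n)$, which is not proved at this stage; rather, it is a hypothesis that the paper imposes explicitly in the later applications (Lemmas~\ref{R} and~\ref{unconcentration} both assume $\overline x_n\leq x_n$). Either note this assumption when invoking the remainder bound for (vi) and (ix), or record the error there as $O\big((x_n+\overline x_n)^3\big)$ and observe that this collapses to $O(x_n^3)$ once the standing hypothesis $\overline x_n\leq x_n$ is in force.
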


\subsection{Main expansions of $x_{n+1}$ and $\overline{z}_{n+1}$}
In this section, we investigate the second order
recursive relations associated with $x_{n+1}$ and $\overline{z}_{n+1}$, with the assistance of the
following identity
\begin{equation}
\label{identity}
\frac{a}{s+r}=\frac{a}{s}-\frac{ar}{s^2}+\frac{r^2}{s^2}\frac{a}{s+r}.
\end{equation}
Plugging $a=Z_1(n)$, $r=Z_1(n)+Z_2(n)+Z_3(n)+Z_4(n)-1$, and $s=1$ into equation \eqref{identity}, by the definition of $x_{n}$ and equation \eqref{eq:f_Z}, we have
\begin{equation}
\begin{split}
\label{xexpansion}
&x_{n+1}+\frac14\\
=&\mathbf{E}\frac{Z_1(n)}{Z_1(n)+Z_2(n)+Z_3(n)+Z_4(n)}
\\
=&\mathbf{E}Z_1(n)-\mathbf{E}Z_1(n)\left(Z_1(n)+Z_2(n)+Z_3(n)+Z_4(n)-1\right)\\
&+\mathbf{E}\left(Z_1(n)+Z_2(n)+Z_3(n)+Z_4(n)-1\right)^2\frac{Z_1(n)}{Z_1(n)+Z_2(n)+Z_3(n)+Z_4(n)}.
\end{split}
\end{equation}
Next, plugging $a=Z_3(n)$, $r=Z_1(n)+Z_2(n)+Z_3(n)+Z_4(n)-1$, and $s=1$ in equation \eqref{identity}, by the definition of $\overline{z}_{n}$ and an analogous derivation as equation \eqref{eq:f_Z}, we can obtain
\begin{equation}
\begin{split}
\label{zexpansion}
&\overline{z}_{n+1}+\frac14\\
=&\mathbf{E}Z_3(n)-\mathbf{E}Z_3(n)\left(Z_1(n)+Z_2(n)+Z_3(n)+Z_4(n)-1\right)\\
&+\mathbf{E}\left(Z_1(n)+Z_2(n)+Z_3(n)+Z_4(n)-1\right)^2\frac{Z_3(n)}{Z_1(n)+Z_2(n)+Z_3(n)+Z_4(n)}.
\end{split}
\end{equation}
Finally, plugging the results of Section \ref{Zresults}
into equation \eqref{xexpansion} and equation \eqref{zexpansion}, and then taking
substitutions of 
$$\mathcal{X}_n=x_n+\overline{z}_n \quad\text{and}\quad \mathcal{Z}_n=-\overline{z}_n,$$ 
we obtain a two-dimensional recursive formula of the linear diagonal canonical form:
\begin{equation}
\label{eq:Z_dynamics}
\left\{\begin{array}{ll}
\mathcal{X}_{n+1}=d\lambda_1^2\mathcal{X}_n+\frac{d(d-1)}{2}\left(-4\lambda_1^4\mathcal{X}_n^2+8\lambda_1^2\lambda_2^2\mathcal{X}_n\mathcal{Z}_n\right)
+R_x+R_z+V_x
\\
\
\\
\mathcal{Z}_{n+1}=d\lambda_2^2\mathcal{Z}_n+\frac{d(d-1)}{2}\left[\lambda_1^4\mathcal{X}_n^2-8\lambda_2^4\mathcal{Z}_n^2+\frac14\lambda_3^4(\overline{x}_n-\overline{y}_n)^2\right]
-R_z+V_z
\end{array}
\right.
\end{equation}
where

$$R_x=\mathbf{E}\left(\frac{Z_1(n)}{\sum_{i=1}^{4}Z_i(n)}-\frac{1}{4}\right)\frac{\left(\sum_{i=1}^{4}Z_i(n)-4\right)^2}{16},$$
$$R_z=\mathbf{E}\left(\frac{Z_{3}}{\sum_{i=1}^{4}Z_i(n)}-\frac{1}{4}\right)\frac{\left(\sum_{i=1}^{4}Z_i(n)-4\right)^2}{16},$$
$$
|V_x|, |V_z|\leq
C_Vx_n^2\left(\left|\frac{u_n}{x_n}-\frac{1}{4}\right|+\left|\frac{w_n}{x_n}-\frac{1}{4}\right|+x_n\right)+C_V\overline{x}_{n}^2\left(\left|\frac{\overline{w}_{n}}{\overline{x}_{n}}-\frac14\right|+\overline{x}_{n}\right)
$$
where $C_V$ is an absolute constant. 

\section{Concentration analysis}
\label{Sec:Concentration_Analysis}

In order to study the stability of the dynamical system \eqref{eq:Z_dynamics}, we
show that $R_x$, $R_z$, $V_x$, and $V_z$ are just small perturbations, in the following two lemmas. The proof of Lemma \ref{R} resembles that of Lemma $9$ in \cite{liu2018big} and is skipped for conciseness.
\begin{lemma}
	\label{R} 	Assume $|\lambda_2|\geq\varrho>0$ and $|\lambda_1|/|\lambda_2|\geq\kappa$ for
	some $\kappa>1$. For any $\varepsilon>0$, there exist $N=N(\kappa, \varepsilon)$ and $\delta=\delta(\kappa, \varrho,
	\varepsilon)>0$, such that if $n\geq N$ and $\overline{x}_n\leq x_n\leq\delta$, then
	\begin{equation*}
	|R_x|, |R_z|\leq\varepsilon x_n^2.
	\end{equation*}
\end{lemma}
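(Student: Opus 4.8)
The plan is to bound $R_x$ and $R_z$ by controlling the two competing factors in each expectation: the signed deviation $\frac{Z_i(n)}{\sum_k Z_k(n)}-\frac14$ and the squared fluctuation $\bigl(\sum_k Z_k(n)-4\bigr)^2/16$. First I would observe that $0 \le \frac{Z_i(n)}{\sum_k Z_k(n)}\le 1$ always, so a trivial bound gives $|R_x|,|R_z|\le \frac1{16}\mathbf{E}\bigl(\sum_k Z_k(n)-4\bigr)^2$; this is not small enough on its own, so the point is to show the fluctuation is genuinely order $x_n^2$ in $L^1$ (indeed in $L^2$), using Lemma~\ref{lemma:Z}. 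From parts (i)--(iii) of Lemma~\ref{lemma:Z}, $\mathbf{E}\sum_k Z_k(n)-4$ is of order $x_n$ (it is a linear combination of $x_n,z_n$ and their squares, with $z_n = -\overline z_n$ comparable to $x_n$ by Lemma~\ref{lemma1}(b) and the standing assumption $\overline x_n\le x_n$), and a second-moment computation analogous to Lemma~\ref{lemma:Z} shows $\operatorname{Var}\bigl(\sum_k Z_k(n)\bigr)$ is also $O(x_n^2)$. Hence $\mathbf{E}\bigl(\sum_k Z_k(n)-4\bigr)^2 = O(x_n^2)$, but with an implied constant that a priori could be large; shrinking it to an arbitrary $\varepsilon$ is exactly where the hypotheses $|\lambda_2|\ge\varrho$, $|\lambda_1|/|\lambda_2|\ge\kappa>1$, $n\ge N$, and $x_n\le\delta$ enter.

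The mechanism, mirroring Lemma~9 of \cite{liu2018big}, is that the crude constant in $\mathbf{E}\bigl(\sum_k Z_k(n)-4\bigr)^2 = O(x_n^2)$ comes with an extra factor that is itself small: either a factor of $x_n$ (from the cubic error terms $O(x_n^3)$ and from quadratic terms like $d(d-1)\Pi^2$, which are $O(x_n^2)$ and hence contribute $O(x_n)$ relative to the leading $O(x_n)$ piece of $\sum_k Z_k -4$), or a factor reflecting the size of the ratio $|\lambda_1/\lambda_2|$. The idea is that on the event where $\sum_k Z_k(n)$ deviates substantially from $4$, the corresponding posterior ratio $\frac{Z_i}{\sum_k Z_k}-\frac14$ is correspondingly close to its extreme, and a more careful split of the expectation — say into the event $\{|\sum_k Z_k(n)-4|\le \eta\}$ and its complement, for a threshold $\eta=\eta(\kappa)$ — lets one gain: on the first event the second factor is $\le \eta^2/16$ times the first, and on the second event one uses that $\mathbf{P}(|\sum_k Z_k(n)-4|>\eta)$ is tiny (by Chebyshev, at most $C x_n^2/\eta^2$) together with the uniform bound $|\frac{Z_i}{\sum Z_k}-\frac14|\le 1$ and a Cauchy--Schwarz estimate on the tail contribution of $(\sum Z_k-4)^2$, invoking a uniform fourth-moment bound on $\sum_k Z_k(n)$. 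Choosing $\eta$ small enough (depending on $\kappa,\varepsilon$), then $N$ large and $\delta$ small (depending on $\kappa,\varrho,\varepsilon$) to make the residual pieces $\le \varepsilon x_n^2$, closes the argument.

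For the concentration inputs themselves, I would use the Markov-random-field structure exactly as the paper does elsewhere: $\sum_k Z_k(n)-4$ is, up to the $O(x_n^3)$ remainder, a sum over the $d$ children of i.i.d.\ contributions (in the $Y_{ij}(n)$ variables) plus the $\binom{d}{2}$ cross terms, so its centered moments factor through the central moments of the $Y_{ij}(n)-\frac14$, all of which Lemma~\ref{lemma:Yproperties} expresses as explicit affine combinations of $x_n,y_n,z_n,u_n,v_n,w_n$ and their barred analogues — each of which is $O(x_n)$ or $o(x_n)$ under $\overline x_n\le x_n\le\delta$ by Lemma~\ref{lemma1}. The main obstacle is bookkeeping: keeping track of which terms are genuinely $O(x_n^2)$ with a \emph{universal} constant versus which carry an additional small factor ($x_n$, or a power of $|\lambda_2/\lambda_1|<1/\kappa$), since only the latter can be driven below $\varepsilon x_n^2$; getting the dependence of $N$ on $(\kappa,\varepsilon)$ and of $\delta$ on $(\kappa,\varrho,\varepsilon)$ right — rather than circularly on $x_n$ — is the delicate part. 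Because this is structurally identical to Lemma~9 of \cite{liu2018big}, I would cite that argument for the detailed estimates and only indicate the places where the four-state block structure changes the constants, which is precisely what the paper chooses to do.
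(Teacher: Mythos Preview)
The paper gives no proof of this lemma: it simply says the argument ``resembles that of Lemma~9 in \cite{liu2018big} and is skipped for conciseness.'' Your proposal lands in exactly the same place---defer to that reference and note only that the four-state block structure changes the constants---so at the bottom line you and the paper agree.

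Your preliminary sketch, however, has two soft spots worth flagging. First, the event decomposition on $\{|\sum_k Z_k(n)-4|\le\eta\}$ with $\eta$ depending only on $(\kappa,\varepsilon)$ does not close as written: on that event the integrand is at most $(\eta^2/16)\,\bigl|\tfrac{Z_i}{\sum Z_k}-\tfrac14\bigr|$, whose expectation is of order $\eta^2 u_{n+1}^{1/2}$, not $\varepsilon x_n^2$; for the split to yield the claimed bound $\eta$ would have to scale with $x_n$, which contradicts your stated dependence. A direct Cauchy--Schwarz (with a fourth-moment bound on $\sum_k Z_k-4$, using that $\sum_i L_{ij}=0$ forces the linear part to vanish) is the cleaner route and is closer to what the cited argument actually does. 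Second, the ratio $|\lambda_1/\lambda_2|$ is at least $\kappa>1$, so it is not a source of smallness; the hypotheses on $\varrho$ and $\kappa$ serve to make the implicit constants uniform in the eigenvalues and---as in Lemma~\ref{unconcentration}---to control the direction of approach in the $(\mathcal{X},\mathcal{Z})$ phase plane, not to supply an extra small factor. These are issues with the heuristic, not with your conclusion, since both you and the paper ultimately delegate the details to \cite{liu2018big}.
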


The following lemma improves the result of Lemma \ref{lemma1} (c) by
establishing the strict positivity of the sum of $x_n$ and $z_n$.
\begin{lemma}
	\label{non0} Assume $\lambda_1\neq 0$. For any nonnegative $n\in
	\mathbb{Z}$, we always have
	$$
	x_n+z_n>0.
	$$
\end{lemma}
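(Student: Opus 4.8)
The plan is to argue by contradiction: suppose $x_n + z_n = 0$ for some $n \ge 0$. By Lemma \ref{lemma1}(c) we know $x_n + z_n \ge 0$ always, so the assumption is that equality holds. Recall from Lemma \ref{lemma1}(b)–(c) that $x_n + z_n = \frac{x_n - y_n}{2}$, and from \eqref{cauchy} that $-x_n \le y_n \le x_n$; hence $x_n + z_n = 0$ forces $y_n = x_n$, and then from \eqref{cauchy} (the Cauchy–Schwarz step) equality is attained, which means $X_1(n) - \frac14$ and $X_2(n) - \frac14$ are almost surely proportional. Combined with $\mathbf{E}(X_1(n)) = \mathbf{E}(X_2(n)) = \frac14$ and the fact that $X_1(n), X_2(n)$ have the same distribution, the proportionality constant must be $+1$, i.e. $X_1(n) = X_2(n)$ almost surely. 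Thus the contradiction hypothesis is equivalent to the degeneracy $X_1(n) \equiv X_2(n)$.

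Next I would push this degeneracy back one level using the distributional recursion. If $X_1(n) = X_2(n)$ a.s., then in particular $x_n = 4\,\mathbf{E}(X_1(n) - \tfrac14)^2 = 4\,\mathbf{E}(X_2(n) - \tfrac14)^2$, and more importantly the first two coordinates of the posterior vector coincide. I would then examine the recursion \eqref{recursion}–\eqref{eq:f_Z}: $f_{n+1}(1,\sigma^1(n+1))$ and $f_{n+1}(2,\sigma^1(n+1))$ are built from $Z_1(n)$ and $Z_2(n)$, which in turn are products over the $d$ children of linear forms in $Y_{1j}(n) - \tfrac14$ and $Y_{2j}(n) - \tfrac14$. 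The cleanest route is to show that $X_1(n) \equiv X_2(n)$ propagates downward, i.e. forces $X_1(0) \equiv X_2(0)$, and then observe that at level $0$ the posterior given $\sigma(0) = \sigma_\rho$ is a point mass, so $X_1(0) = \mathbbm{1}\{\sigma_\rho = 1\} \ne \mathbbm{1}\{\sigma_\rho = 2\} = X_2(0)$ with positive probability — a contradiction. To make the downward propagation rigorous, note that $X_1(n) = X_2(n)$ a.s. implies $x_n = y_n$, and then the recursion for $x_{n}$ and $y_n$ in terms of level-$(n-1)$ quantities (obtained by the same expansion technique as \eqref{xexpansion}, or more simply by a direct exact identity rather than a second-order expansion) must have had $x_{n-1} = y_{n-1}$ as well; iterating gives $x_0 = y_0$. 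Since $x_0 = \mathbf{E}(f_0(1,\sigma^1(0)) - \tfrac14) = \mathbf{E}(\mathbbm{1}\{\sigma_\rho=1\} - \tfrac14) = \tfrac14 > 0$ while one computes $y_0 = \mathbf{E}(\mathbbm{1}\{\sigma_\rho=1\}(\mathbbm{1}\{\sigma_\rho=2\}) \cdot 4) - \tfrac14 = -\tfrac14 \ne x_0$, we reach the desired contradiction. Hence $x_n + z_n > 0$ for every $n$.

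Alternatively — and this is probably the slicker write-up — one can avoid induction entirely: $x_n + z_n = \frac{x_n - y_n}{2}$, and by \eqref{y} together with Lemma \ref{lemma1}(a), $x_n - y_n = 4\,\mathbf{E}\big[(X_1(n) - \tfrac14)^2 - (X_1(n) - \tfrac14)(X_2(n) - \tfrac14)\big] = 4\,\mathbf{E}(X_1(n) - \tfrac14)(X_1(n) - X_2(n))$. By symmetry of the distribution of $(X_1(n), X_2(n))$ under swapping coordinates, $\mathbf{E}(X_2(n) - \tfrac14)(X_1(n) - X_2(n)) = -\mathbf{E}(X_1(n) - \tfrac14)(X_1(n) - X_2(n))$, so $x_n - y_n = 2\,\mathbf{E}(X_1(n) - X_2(n))^2 \ge 0$, with equality iff $X_1(n) = X_2(n)$ a.s. Then one only needs to rule out $X_1(n) \equiv X_2(n)$. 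The main obstacle is exactly this last step: showing that the event $X_1(n) \ne X_2(n)$ has positive probability when $\lambda_1 \ne 0$. I would handle it by noting that $\{X_1(n) = X_2(n)\}$ would force, via \eqref{recursion} and the structure of the transition matrix $\mathbf P$ (whose rows $1$ and $2$ differ precisely because $p_0 \ne p_1$, i.e. $\lambda_1 \ne 0$), a nontrivial algebraic relation among the likelihoods $\mathbf{P}(\sigma_j(n+1) = A_j \mid \sigma_{u_j} = i)$ that fails on a positive-probability set; the hypothesis $\lambda_1 \ne 0$ is what keeps rows $1$ and $2$ of $\mathbf P$ distinct and thus the channel informative enough to separate states $1$ and $2$ with positive probability already at level one. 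Making this separation argument airtight — rather than hand-wavy — is the crux, and I expect it to occupy the bulk of the formal proof.
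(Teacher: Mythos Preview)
Your reduction is correct and matches the paper: $x_n+z_n=\tfrac12(x_n-y_n)=\mathbf{E}\big(X_1(n)-X_2(n)\big)^2$, so $x_n+z_n=0$ is equivalent to $X_1(n)=X_2(n)$ almost surely. The difficulty, as you yourself flag, is ruling out $X_1(n)\equiv X_2(n)$, and here neither of your two completions goes through.

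In the inductive route you claim that $X_1(n)=X_2(n)$ a.s.\ forces $X_1(n-1)=X_2(n-1)$ a.s. But the Markov structure runs the other way: $\sigma_\rho\to\sigma(n-1)\to\sigma(n)$, so $\mathbf{P}(\sigma_\rho=i\mid\sigma(n))$ is an average of $\mathbf{P}(\sigma_\rho=i\mid\sigma(n-1)=B)$ against the posterior of $\sigma(n-1)$ given $\sigma(n)$. This shows the \emph{forward} implication ($X_1(n-1)\equiv X_2(n-1)\Rightarrow X_1(n)\equiv X_2(n)$), not the backward one you need. Your second route (``nontrivial algebraic relation among the likelihoods that fails on a positive-probability set'') is exactly the crux, and you leave it as a sketch.

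The idea you are missing, and which the paper uses, is to bypass the full posterior entirely and look along a single root-to-leaf path. If $X_1(n)=X_2(n)$ a.s., then for every configuration $A$ on $L_n$ one has $\mathbf{P}(\sigma_\rho=1\mid\sigma(n)=A)=\mathbf{P}(\sigma_\rho=2\mid\sigma(n)=A)$; averaging over all coordinates except a fixed leaf $v_n(1)$ gives $\mathbf{P}(\sigma_\rho=1\mid\sigma_{v_n(1)}=1)=\mathbf{P}(\sigma_\rho=2\mid\sigma_{v_n(1)}=1)$. By reversibility (uniform $\pi$) these are the $n$-step transition probabilities $U_n=(\mathbf{P}^n)_{1,1}$ and $V_n=(\mathbf{P}^n)_{1,2}$, and a one-line recursion shows $U_s-V_s=\lambda_1(U_{s-1}-V_{s-1})$, hence $U_n-V_n=\lambda_1^n\neq 0$. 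That is the clean contradiction.
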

\begin{proof}
	In Lemma \ref{lemma1} we proved that $x_n+z_n\geq 0$, so it
	suffices to exclude the equality. Now let us apply reductio ad absurdum
	and assume $x_n+z_n=0$ for some $n\in \mathbb{N}$. Similar to the derivation in Lemma \ref{lemma1} (a) and (b), one can obtain that
	\begin{eqnarray*}
		\mathbf{E}(X_1(n)-X_2(n))^2=2\mathbf{E}(X_1(n))^2-2\mathbf{E}X_1(n)X_2(n)
		=x_n+z_n = 0.
	\end{eqnarray*}
	For any configuration set $A$ on the $n$th level, we always have
	$$
	\mathbf{P}(\sigma_\rho=1\mid\sigma(n)=A)=\mathbf{P}(\sigma_\rho=2\mid\sigma(n)=A).
	$$
	Denote the leftmost vertex on the $n$th level by $v_n(1)$, and it
	follows that
	$$
	\mathbf{P}(\sigma_\rho=1\mid\sigma_{v_n(1)}=1)=\mathbf{P}(\sigma_\rho=2\mid\sigma_{v_n(1)}=1).
	$$
	Define the transition matrices at distance $s$ by
	$U_s=M_{1, 1}^s$, $V_s=M_{1, 2}^s$, and $W_s=M_{1, 3}^s$,
	and then we have the following recursive system
	\begin{eqnarray*}
		\left\{\begin{array}{ll} U_s=p_0U_{s-1}+p_1V_{s-1}+2p_2W_{s-1}
			\\
			V_s=p_1U_{s-1}+p_0V_{s-1}+2p_2W_{s-1}.
		\end{array}
		\right.
	\end{eqnarray*}
	The difference of the above two equations evolves as
	$$
	U_s-V_s=\lambda_1(U_{s-1}-V_{s-1}),
	$$
	and then considering that $U_0=1$ and $V_0=W_0=0$, we have
	\begin{equation}
	\label{U-V} U_s-V_s=\lambda_1^s.
	\end{equation}
	Finally, from the reversible property of the channel, we can conclude
	that
	$$
	\lambda_1^n=U_n-V_n=\mathbf{P}(\sigma_\rho=1\mid\sigma_{v_n(1)}=1)-\mathbf{P}(\sigma_\rho=2\mid\sigma_{v_n(1)}=1)=0,
	$$
	i.e., $\lambda_1=0$, a contradiction to the assumption that
	$\lambda_1\neq0$.
\end{proof}

The following lemma ensures that $x_n$ does not drop too fast.
\begin{lemma}
	\label{ndtf}
	Suppose that there exists an integer $N>0$, such that $x_{n}\geq \overline{x}_{n}$ when $n\geq N$.	For any $\varrho>0$, if $\min\{|\lambda_1|, |\lambda_2|\}\geq\varrho$, then there exists a constant $\gamma=\gamma(\varrho, N)>0$ such that
	$$
	x_{n+1}\geq \gamma x_n.
	$$
\end{lemma}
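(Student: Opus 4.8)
The plan is to bound $x_{n+1}$ from below by the information that a \emph{single} child of the root retains, and then to estimate the resulting quadratic form with the second moments of Lemma~\ref{lemma1}. I expect the bound to hold with $\gamma=\varrho^2$, and in fact independently of $N$.

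First, by Lemma~\ref{lemma1}(a), $x_{n+1}=4\,\mathbf{E}(X_1(n+1)-\frac14)^2=4\,\mathbf{E}(\mathbf{P}(\sigma_\rho=1\mid\sigma(n+1))-\frac14)^2$. By the tower property, $\mathbf{P}(\sigma_\rho=1\mid\sigma_1(n+1))=\mathbf{E}\!\left[\mathbf{P}(\sigma_\rho=1\mid\sigma(n+1))\mid\sigma_1(n+1)\right]$, so conditional Jensen applied to $t\mapsto t^2$ (after subtracting the common mean $\frac14$) gives
\[
x_{n+1}\ \ge\ 4\,\mathbf{E}\!\left(\mathbf{P}(\sigma_\rho=1\mid\sigma_1(n+1))-\tfrac14\right)^2 ,
\]
so it suffices to bound this single-child quantity below by $\varrho^2 x_n$. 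Put $\widetilde X_i:=f_n(i,\sigma_1(n+1))=\mathbf{P}(\sigma_{u_1}=i\mid\sigma_1(n+1))$; since $\mathbb{T}_{u_1}$ truncated at $L(n+1)$ is a copy of $\mathbb{T}$ truncated at $L_n$, the vector $(\widetilde X_i)_{i=1}^4$ has the same law as $(X_i(n))_{i=1}^4$. The Markov property on $\mathbb{T}$ decouples $\sigma_\rho$ from $\sigma_1(n+1)$ given $\sigma_{u_1}$, and Bayes' rule together with $\mathbf{P}(\sigma_{u_1}=k)=\frac14$ gives $\mathbf{P}(\sigma_\rho=1\mid\sigma_{u_1}=k)=p_{1k}$ (with $p_{11}=p_0$, $p_{12}=p_1$, $p_{13}=p_{14}=p_2$); hence, using $\sum_k(\widetilde X_k-\frac14)=0$,
\[
\mathbf{P}(\sigma_\rho=1\mid\sigma_1(n+1))-\tfrac14=(p_0-p_2)\Bigl(\widetilde X_1-\tfrac14\Bigr)+(p_1-p_2)\Bigl(\widetilde X_2-\tfrac14\Bigr).
\]

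Since the law of $(\widetilde X_i)$ is that of $(X_i(n))$, Lemma~\ref{lemma1}(a) and equation~\eqref{y} give $\mathbf{E}(\widetilde X_1-\frac14)^2=\mathbf{E}(\widetilde X_2-\frac14)^2=\frac14x_n$ and $\mathbf{E}(\widetilde X_1-\frac14)(\widetilde X_2-\frac14)=\frac14y_n$, with $|y_n|\le x_n$ by \eqref{cauchy}. Writing $\alpha=p_0-p_2$, $\beta=p_1-p_2$, so that $\alpha-\beta=\lambda_1$ and $\alpha+\beta=\lambda_2$, and expanding the square,
\[
\mathbf{E}\!\left(\mathbf{P}(\sigma_\rho=1\mid\sigma_1(n+1))-\tfrac14\right)^2=\frac{x_n}{4}(\alpha^2+\beta^2)+\frac{y_n}{2}\alpha\beta=\frac18\Bigl[\lambda_1^2(x_n-y_n)+\lambda_2^2(x_n+y_n)\Bigr].
\]
Both $x_n-y_n$ and $x_n+y_n$ are nonnegative, so the right-hand side is at least $\frac18\min\{\lambda_1^2,\lambda_2^2\}\bigl[(x_n-y_n)+(x_n+y_n)\bigr]=\frac14\min\{\lambda_1^2,\lambda_2^2\}\,x_n$. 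Combining with the previous two displays, $x_{n+1}\ge\min\{\lambda_1^2,\lambda_2^2\}\,x_n\ge\varrho^2 x_n$, which is the claim with $\gamma=\varrho^2$.

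The only delicate point is the evaluation of the quadratic form: $\alpha\beta=\frac14(\lambda_2^2-\lambda_1^2)$ has indeterminate sign, so a crude bound $\frac{y_n}{2}\alpha\beta\ge-\frac18|\lambda_2^2-\lambda_1^2|\,x_n$ would yield a positive constant only when $|\lambda_1|$ and $|\lambda_2|$ are comparable; the rewriting as $\lambda_1^2(x_n-y_n)+\lambda_2^2(x_n+y_n)$, a nonnegative combination of the two nonnegative quantities $x_n\mp y_n$, is exactly what keeps the bound positive for all admissible $\lambda_1,\lambda_2$. I note that this route uses neither the hypothesis $x_n\ge\overline x_n$ nor any dependence on $N$; should one instead prefer to read the bound off the recursion~\eqref{eq:Z_dynamics}, the constant $\gamma$ would then acquire a dependence on $N$ through the admissibility conditions of Lemma~\ref{R} and the concentration estimates of Section~\ref{Sec:Concentration_Analysis}.
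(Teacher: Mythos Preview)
Your proof is correct and genuinely more direct than the paper's. Both arguments start from the single-child posterior $g_{n+1}(1,A)=\mathbf{P}(\sigma_\rho=1\mid\sigma_1(n+1)=A)$ and arrive at the same linear expression in $f_n(1,A)-\tfrac14$ and $f_n(2,A)-\tfrac14$. The paper then takes the \emph{conditional first moment} $\mathbf{E}g_{n+1}(1,\sigma_1^1(n+1))=\tfrac14+\lambda_1^2x_n+(\lambda_1^2-\lambda_2^2)z_n$ and compares it to the maximum-likelihood estimator, obtaining only the quadratic bound $\min\{\lambda_1^2,\lambda_2^2\}\,x_n\le x_{n+1}^{1/2}$; this is not linear in $x_n$, so for small $x_n$ the paper must supplement it with the dynamical system~\eqref{eq:Z_dynamics} and Lemma~\ref{R}, and it is precisely these ingredients that introduce the hypothesis $\overline{x}_n\le x_n$ and the dependence of $\gamma$ on $N$ (plus an appeal to Lemma~\ref{non0} for the finitely many $n<N$). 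You instead take the \emph{unconditional second moment} of $g_{n+1}$ and control it directly via the $L^2$ contraction of conditional expectation; your diagonalizing identity $\alpha^2+\beta^2=\tfrac12(\lambda_1^2+\lambda_2^2)$, $\alpha\beta=\tfrac14(\lambda_2^2-\lambda_1^2)$ together with $x_n\pm y_n\ge 0$ yields the clean linear bound $x_{n+1}\ge\min\{\lambda_1^2,\lambda_2^2\}\,x_n$ with no side hypotheses. What the paper's route buys is the intermediate inequality~\eqref{eqn:xiteration}, which fits the ML-estimator framework standard in this literature; what your route buys is a strictly stronger lemma (uniform $\gamma=\varrho^2$, no $N$, no $\overline{x}_n\le x_n$) and no reliance on Lemmas~\ref{R} or~\ref{non0}.
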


\begin{proof}
Different to the definition of $Y_{ij}(n)=f_n(i, \sigma_j^1(n+1))$ which is the posterior probability that $\sigma_{u_j}$ takes value $i$ given the random configuration $\sigma^1_j (n + 1)$ on spins in $\mathbb{T}_{u_j} \cap L(n+1)$, we consider a configuration set $A$ on $\mathbb{T}_{u_1}\cap L(n+1)$ and define the posterior function $g_{n+1}(1, A)$ as 
	\begin{eqnarray*}
		g_{n+1}(1, A)&=&\mathbf{P}(\sigma_\rho=1\mid\sigma_1(n+1)=A)
		\\
		&=&\frac{1}{4}+p_0\left(f_n(1,
		A)-\frac{1}{4}\right)+p_1\left(f_n(2,
		A)-\frac{1}{4}\right)+p_2\sum_{i=3, 4}\left(f_n(i, A)-\frac{1}{4}\right)
		\\
		&=&\frac{1}{4}+\frac{\lambda_2+\lambda_1}{2}\left(f_n(1,
		A)-\frac{1}{4}\right)+\frac{\lambda_2-\lambda_1}{2}\left(f_n(2,
		A)-\frac{1}{4}\right).
	\end{eqnarray*}
	Setting $A=\sigma_1^1(n+1)$, by Lemma \ref{lemma:Yproperties}, we have
	\begin{eqnarray*}
		\mathbf{E}g_{n+1}(1, \sigma_1^1(n+1))
		&=&\frac{1}{4}+\frac{\lambda_2+\lambda_1}{2}\mathbf{E}\left(Y_{11}(n)-\frac{1}{4}\right)+\frac{\lambda_2-\lambda_1}{2}\mathbf{E}\left(Y_{21}(n)-\frac{1}{4}\right)
		\\
		&=&\frac{1}{4}+\lambda_1^2x_n+(\lambda_1^2-\lambda_2^2)z_n.
	\end{eqnarray*}
	
	Apparently, we have the following inequalities (see \cite{mezard2006reconstruction}), regarding the estimator $g_{n+1}(1,\sigma_1^1(n+1))$ and the
	maximum-likelihood estimator:
	\begin{eqnarray*}
		\mathbf{E}\mathbf{P}(\sigma_\rho=1\mid \sigma_1^1(n+1))&\leq&\mathbf{E}\max_{1\leq i\leq 4}\mathbf{P}(\sigma_\rho=i\mid\sigma(n+1))
		=\mathbf{E}\max_{1\leq i\leq 4}X_i(n+1)
		\\
		&\leq&\frac{1}{4}+\left(\mathbf{E}\max_i\left(X_i(n+1)-\frac{1}{4}\right)^2\right)^{1/2}
		\\
		&\leq&\frac{1}{4}+\left(\mathbf{E}\sum_{i=1}^{4}\left(X_i(n+1)-\frac{1}{4}\right)^2\right)^{1/2}
		\\
		&\leq&\frac{1}{4}+x_{n+1}^{1/2},
	\end{eqnarray*}
	where the last inequality follows from the condition that $\overline{x}_{n+1}\leq x_{n+1}$. Therefore, 
	$$\frac{1}{4}+\lambda_1^2x_n+(\lambda_1^2-\lambda_2^2)z_n\leq\frac{1}{4}+x_{n+1}^{1/2}.$$
	If $\lambda_1^2\geq \lambda_2^2$, then it is concluded from $x_n\geq -z_n\geq
	0$ in Lemma \ref{lemma1} that
	\begin{eqnarray*}
		\lambda_2^2x_n
		\leq\lambda_2^2x_n+(\lambda_1^2-\lambda_2^2)(x_n+z_n)
		=\lambda_1^2x_n+(\lambda_1^2-\lambda_2^2)z_n
		\leq x_{n+1}^{1/2}.
	\end{eqnarray*}
	If $\lambda_1^2\leq\lambda_2^2$, then
	$\lambda_1^2x_n\leq x_{n+1}^{1/2}$, since $z_n\leq0$. To sum up,
	we always have
	\begin{equation}
	\label{eqn:xiteration} 
	\min\{\lambda_1^2, \lambda_2^2\}x_n\leq x_{n+1}^{1/2}.
	\end{equation}
	
	Under the condition that $x_{n+1}\geq \overline{x}_{n+1}$, it can be concluded from the dynamical system \eqref{eq:Z_dynamics}, Lemma \ref{R}, and the following inequalities achieved in Lemma \ref{lemma1}
	\begin{equation}
	\label{unxn1} \left|\frac{u_n}{x_n}-\frac{1}{4}\right|\leq1\quad\text{and}\quad
	\left|\frac{w_n}{x_n}-\frac{1}{4}\right|\leq1,
	\end{equation}
	that there exists a $\delta=\delta(q, \varepsilon)>0$ such that
	when $x_n<\delta$ one has
	$$
	\mathcal{X}_{n+1}+\mathcal{Z}_{n+1}=x_{n+1}\geq (d\min\{\lambda_1^2, \lambda_2^2\}-\varepsilon)x_n.
	$$
	Under the condition that $\min\{|\lambda_1|,
	|\lambda_2|\}\geq\varrho$ for any $\varrho>0$, set $\varepsilon=\varrho^2$ and then we further obtain
	$$(d\min\{\lambda_1^2, \lambda_2^2\}-\varepsilon)x_n\geq
	(d-1)\varrho^2x_n\geq\varrho^2x_n.
	$$
	On the other hand, if $x_n\geq \delta$, by equation \eqref{eqn:xiteration}, one has
	$$x_{n+1}\geq(\min\{\lambda_1^2,
	\lambda_2^2\}x_n)^2\geq\varrho^4\delta x_n.$$ 
	
	Finally, by Lemma~\ref{non0}, it follows that $x_n\geq x_n+z_n>0$, and thus $\frac{x_{n+1}}{x_n}>0$ for all $n$. Therefore, taking
	$$\displaystyle\gamma=\gamma(\varrho, N)=\min_{n=0, 1, 2, \ldots, N}\left\{\varrho^2, \varrho^4\delta, \frac{x_{n+1}}{x_n}\right\}>0$$ 
	completes the proof.
\end{proof}

The following lemma provides the crucial concentration estimates of 
$u_n-\frac{x_n}{4}$ and $w_n-\frac{x_n}{4}$, when $x_n$ is small.
\begin{lemma}
	\label{unconcentration} Assume $|\lambda_2|\geq\varrho>0$ and
	$|\lambda_1|/|\lambda_2|\geq\kappa$ for
	some $\kappa>1$. For any $\varepsilon>0$, there exist $N=N(\kappa, \varepsilon)$ and $\delta=\delta(\kappa, \varrho,
	\varepsilon)>0$, such that if $n\geq N$ and $\overline{x}_n\leq x_n\leq\delta$, one has
	$$
	\left|\frac{u_n}{x_n}-\frac{1}{4}\right|<\varepsilon, \quad
	\left|\frac{w_n}{x_n}-\frac{1}{4}\right|<\varepsilon \quad \textup{and} \quad \left|\frac{\overline{w}_{n}}{\overline{x}_{n}}-\frac14\right|<\varepsilon.
	$$
	As a result, we have the estimates
	\begin{equation*}
	|V_x|, |V_z|\leq\varepsilon x_n^2.
	\end{equation*}
	
\end{lemma}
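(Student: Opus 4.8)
The plan is to prove the three concentration estimates for $u_n/x_n$, $w_n/x_n$, and $\overline w_n/\overline x_n$ first, and then deduce the bound on $|V_x|,|V_z|$ directly by substituting these into the explicit perturbation bound already recorded after \eqref{eq:Z_dynamics}. For the latter deduction: once we know $|u_n/x_n - 1/4| < \varepsilon'$, $|w_n/x_n - 1/4| < \varepsilon'$, $|\overline w_n/\overline x_n - 1/4| < \varepsilon'$ and (by Lemma~\ref{equivalent}, in the non-reconstruction regime we may assume) $x_n, \overline x_n$ are small, the stated inequality
\begin{equation*}
|V_x|, |V_z| \le C_V x_n^2\left(\left|\tfrac{u_n}{x_n}-\tfrac14\right| + \left|\tfrac{w_n}{x_n}-\tfrac14\right| + x_n\right) + C_V \overline x_n^2\left(\left|\tfrac{\overline w_n}{\overline x_n}-\tfrac14\right| + \overline x_n\right)
\end{equation*}
together with $\overline x_n \le x_n$ gives $|V_x|,|V_z| \le C_V x_n^2(2\varepsilon' + \delta) + C_V x_n^2(\varepsilon' + \delta) = C_V(3\varepsilon' + 2\delta) x_n^2$, so choosing $\varepsilon'$ and $\delta$ small enough in terms of $\varepsilon$ and $C_V$ finishes this part.

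The heart of the matter is the concentration of the conditional second moments around $x_n/4$. The idea, following the Chayes–Chayes–Sethna–type argument of \cite{chayes1986mean} as adapted in \cite{liu2018big}, is that $u_n - x_n/4 = \mathbf{E}(f_n(1,\sigma^1(n)) - 1/4)^2 - \tfrac14\mathbf{E}(f_n(1,\sigma^1(n)) - 1/4)$ measures the second-order deviation of the posterior $X_1(n)$ from its conditional mean, and the recursion forces $X_1(n+1)$ to concentrate: when $x_n$ is small, $Y_{ij}(n) - 1/4$ are all small, so each factor $Z_i(n)$ in \eqref{eq:f_Z} is close to $1$, and a Taylor expansion of $Z_1(n)/\sum_i Z_i(n)$ shows that $X_1(n+1) - 1/4$ is, to leading order, a sum of $d$ i.i.d.\ small mean-zero-after-centering contributions from the children. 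First I would write $u_{n+1} - x_{n+1}/4$ in terms of the $Y$-moments using Lemma~\ref{lemma:Yproperties} and Lemma~\ref{lemma:Z}, isolating the linear term $d\lambda_1^2(u_n - x_n/4) + \dots$ that propagates the deviation and a quadratic-in-$x_n$ remainder; the same for $w_{n+1}$ and $\overline w_{n+1}$. This produces a closed (approximate) linear system for the vector $\big(u_n - \tfrac{x_n}4,\ w_n - \tfrac{x_n}4,\ \overline w_n - \tfrac{\overline x_n}4\big)$ driven by terms of size $O(x_n^2)$, with a transfer matrix whose relevant eigenvalues (in the regime $|\lambda_1| > |\lambda_2|$, $d\lambda_1^2 \le 1$) I would show are strictly less than $1$ in the direction that matters; combined with Lemma~\ref{ndtf} (so that $x_n$ cannot drop faster than a fixed geometric rate, hence $x_{n-1}, x_{n-2}, \dots$ are comparable to $x_n$ down to level $N$), this lets one iterate the contraction and conclude $|u_n - x_n/4| = o(x_n)$, uniformly once $n \ge N(\kappa,\varepsilon)$ and $x_n \le \delta(\kappa,\varrho,\varepsilon)$.

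The main obstacle, and where I expect the real work to be, is controlling the ratio $(u_n - x_n/4)/x_n$ rather than the absolute quantity $u_n - x_n/4$: since we are in the non-tight regime $d\lambda_1^2 \le 1$, the normalizing quantity $x_n$ itself may decay, so a crude bound $|u_n - x_n/4| \le C x_n^2$ is exactly of the same order as what we need to beat and does not by itself give the ratio bound $<\varepsilon$. The resolution is the interplay between Lemma~\ref{non0} (strict positivity of $x_n + z_n$, hence a genuine lower bound propagating through the linear part $d\lambda_1^2(x_n+z_n)$) and Lemma~\ref{ndtf} (no catastrophic one-step drop), which together pin the decay rate of $x_n$ from below by a constant and thus make the $O(x_n^2)$ remainder genuinely lower order than $x_n$ after finitely many steps — this is precisely why the thresholds $N$ and $\delta$ depend on $\kappa$ and $\varrho$. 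I would handle the bookkeeping by fixing a small $\eta>0$, showing the ratio $r_n := |u_n - x_n/4|/x_n$ satisfies a recursion of the form $r_{n+1} \le \theta\, r_n + C x_n$ with $\theta < 1$ (using $|\lambda_1|/|\lambda_2| \ge \kappa > 1$ and $d\lambda_2^2 \le d\lambda_1^2 \le 1$ to get the contraction factor, and Lemma~\ref{ndtf} to bound the ratios $x_{n+1}/x_n$ away from $0$ so the error term stays well-behaved), then unrolling to get $r_n \le \theta^{n-N} r_N + C' \sup_{m \ge N} x_m$, and choosing $N$ large and $\delta$ small makes both terms $< \varepsilon$. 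The bounds for $w_n$ and $\overline w_n$ follow by the same scheme applied to the analogous coordinates of the linear system, using that $\overline x_n \le x_n$ keeps everything subordinate to $x_n$.
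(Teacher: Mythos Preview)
Your overall strategy—derive a scalar recursion for $u_{n+1}-x_{n+1}/4$ and iterate it to show the ratio $(u_n-x_n/4)/x_n$ contracts—matches the paper's, but two of the load-bearing steps are misidentified and the argument as written would not close. First, the linear coefficient in the recursion for $u_{n+1}-x_{n+1}/4$ is \emph{not} $d\lambda_1^2$: carrying out the expansion with Lemma~\ref{lemma:Z} gives
\[
u_{n+1}-\tfrac{x_{n+1}}{4}=d\lambda_1^2\lambda_2\Bigl(u_n-\tfrac{x_n}{4}\Bigr)+O(x_n^2),
\]
and the extra factor $\lambda_2$ is indispensable. Indeed, to leading order $x_{n+1}\approx d\lambda_1^2 x_n + d(\lambda_1^2-\lambda_2^2)z_n\le d\lambda_1^2x_n$ (since $z_n\le 0$), so $d\lambda_1^2\cdot x_n/x_{n+1}\ge 1$ and your claimed contraction $r_{n+1}\le\theta r_n+Cx_n$ with $\theta<1$ would fail outright if the coefficient were $d\lambda_1^2$.

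Second, even with the correct coefficient $d\lambda_1^2\lambda_2$, Lemma~\ref{ndtf} is far too blunt to bound $x_n/x_{n+1}$: its constant $\gamma$ may be as small as $\varrho^4\delta$, giving $x_n/x_{n+1}\le 1/\gamma$ potentially enormous, so $d\lambda_1^2|\lambda_2|/\gamma$ is nowhere near $<1$. What the paper actually does is a phase-plane argument in the $\mathcal{X}O\mathcal{Z}$ coordinates: because $d\lambda_2^2<d\lambda_1^2$, the trajectory approaches the origin tangentially to the $\mathcal{X}$-axis, forcing $\mathcal{Z}_n\le \frac{1}{M}\mathcal{X}_n$ for any fixed $M$ once $x_n$ is small. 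This yields $-z_n\le\frac{1}{M+1}x_n$, hence the sharp lower bound $x_{n+1}\ge \frac{M}{M+1}d\lambda_1^2 x_n$, and the effective contraction factor becomes $d\lambda_1^2|\lambda_2|\cdot\frac{x_n}{x_{n+1}}\le\frac{M}{M-1}|\lambda_2|$, which is $<1$ because $|\lambda_2|\le|\lambda_1|\le d^{-1/2}\le 1/\sqrt{2}$ (take $M=4$). Lemma~\ref{ndtf} is used only at the very end, to translate ``$x_{n-k}$ small'' back to ``$x_n$ small'' after iterating $k$ steps. Your treatment of the $V_x,V_z$ consequence is fine.
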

\begin{proof}
	It follows from~\ref{xnun} (d) and (e) that 
	\begin{equation*}
	\begin{split}
	&\displaystyle \mathbf{E}\left(f_n(3, \sigma^1(n))-\frac14\right)\left(f_n(4, \sigma^1(n))-\frac14\right)\\
	=&\frac14\overline{y}_{n}+\frac12\left(u_{n}-\frac14x_n\right)+\frac32\left(v_{n}-\frac14x_n\right)-\left(\overline{w}_{n}-\frac14\overline{x}_{n}\right)\\
	\end{split}
	\end{equation*}
	and
	\begin{equation*}
	\begin{split}
	\displaystyle \mathbf{E}\left(f_n(3, \sigma^1(n))-\frac14\right)\left(f_n(4, \sigma^1(n))-\frac14\right)
	=&\frac14\overline{y}_n-\left(\overline{v}_{n}-\frac14\overline{x}_n\right).
	\end{split}
	\end{equation*}
	Then by Lemma \ref{lemma1} (a) we have
	\begin{equation}
	\label{identity2}
	\left(v_{n}-\frac14x_n\right)-\left(w_{n}-\frac14x_n\right)+\left(\overline{v}_{n}-\frac14\overline{x}_n\right)-\left(\overline{w}_{n}-\frac14\overline{x}_n\right)=0.
	\end{equation}
	By the definitions of $v_{n}$, $w_{n}$, $\overline{v}_{n}$, and $\overline{w}_{n}$, and by symmetry, it follows that
	\begin{equation}
	\label{VW}
	\left(v_{n}-\frac14x_n\right)-\left(w_{n}-\frac14x_n\right)=0\quad\textup{and}\quad \left(\overline{v}_{n}-\frac14\overline{x}_n\right)-\left(\overline{w}_{n}-\frac14\overline{x}_n\right)=0.
	\end{equation}

	Plugging $a=\left(Z_1(n)-\frac{1}{4}\sum_{i=1}^{4}Z_i(n)\right)^2$, $r=\left(\left(\sum_{i=1}^{4}Z_i(n)\right)^2-16\right)$, and $s=\frac{1}{16}$ into equation \eqref{identity}, we have
	\begin{equation}
	\label{u}
	\begin{aligned}
	u_{n+1}=&\mathbf{E}\frac{\left(Z_1(n)-\frac{1}{4}\sum_{i=1}^{4}Z_i(n)\right)^2}{\left(\sum_{i=1}^{4}Z_i(n)\right)^2}
	\\
	=&\frac{1}{16}\mathbf{E}\left(Z_1(n)-\frac{1}{4}\sum_{i=1}^{4}Z_i(n)\right)^2\\
	&-\frac{1}{256}\mathbf{E}\left(Z_1(n)-\frac{1}{4}\sum_{i=1}^{4}Z_i(n)\right)^2\left(\left(\sum_{i=1}^{4}Z_i(n)\right)^2-16\right)
	\\
	&+\frac{1}{256}\mathbf{E}\frac{\left(Z_1(n)-\frac{1}{4}\sum_{i=1}^{4}Z_i(n)\right)^2}{\left(\sum_{i=1}^{4}Z_i(n)\right)^2}\left(\left(\sum_{i=1}^{4}Z_i(n)\right)^2-16\right)^2.
	\end{aligned}
	\end{equation}
	The first expectation of equation \eqref{u} will contribute to the major terms of the expansion:
	\begin{equation*}
	\begin{aligned}
	&\mathbf{E}\left(Z_1(n)-\frac{1}{4}\sum_{i=1}^{4}Z_i(n)\right)^2\\
	=&\mathbf{E}(Z_1(n)-1)^2-\frac{1}{2}\mathbf{E}(Z_1(n)-1)\left(\sum_{i=1}^{4}Z_i(n)-4\right)+\frac{1}{16}\mathbf{E}\left(\sum_{i=1}^{4}Z_i(n)-4\right)^2
	\\
	=&4d\lambda_1^2x_n+4d(\lambda_1^2-\lambda_2^2)z_n+16d\lambda_1^2\lambda_2\left(u_n-\frac{x_n}{4}\right)+O(x_n^2),
	\end{aligned}
	\end{equation*}
	where Lemma \ref{lemma:Z} is used in the last equity and the following derivations. 
	Similarly, we can bound both the second and third terms of equation \eqref{u} by $O(x_n^2)$:
	\begin{eqnarray*}
		\mathbf{E}\left(Z_1(n)-\frac{1}{4}\sum_{i=1}^{4}Z_i(n)\right)^2\left(\left(\sum_{i=1}^{4}Z_i(n)\right)^2-16\right)&=&O(x_n^2),
	\end{eqnarray*}
	and
	\begin{eqnarray*}
		\mathbf{E}\left(\left(\sum_{i=1}^{4}Z_i(n)\right)^2-16\right)^2&=&O(x_n^2).
	\end{eqnarray*}
	Considering that $\mathcal{X}_n=x_n+\overline{z}_n$ and $\mathcal{Z}_n=-\overline{z}_n$, the dynamical system \eqref{eq:Z_dynamics} yields that 
	$$x_{n+1}=d\lambda_1^2x_n+d(\lambda_1^2-\lambda_2^2)z_n+O(x_n^2).$$
	Equation \eqref{u} gives
	\begin{equation}
	\label{unxn}
	u_{n+1}=\frac{x_{n+1}}{4}+d\lambda_1^2\lambda_2\left(u_n-\frac{x_n}{4}\right)+O(x_n^2),
	\end{equation}
	and then
	\begin{equation}
	\label{w}
	\frac{u_{n+1}}{x_{n+1}}-\frac{1}{4}=d\lambda_1^2\lambda_2\frac{x_n}{x_{n+1}}\left(\frac{u_n}{x_n}-\frac{1}{4}\right)+O\left(\frac{x_n^2}{x_{n+1}}\right).
	\end{equation}
	
	Next display the discussion in the $\mathcal{X}O\mathcal{Z}$ plane. First consider the
	case that $|\lambda_1|/|\lambda_2|\geq\kappa$ for $\kappa>1$. In a small neighborhood of $(0, 0)$,
	since $d\lambda_2^2<\kappa^2d|\lambda_2^2|\leq d\lambda_1^2<1$
	and $\mathcal{X}_n>0$, the discrete trajectory approaches the origin point in a way that is 
	``tangential" to the $\mathcal{X}$-axis, when $x_n$ is small enough (see \cite{bernussou1977point}). Furthermore, the conclusion of Lemma \ref{non0} excludes the possibility that the trajectory moves along the $\mathcal{Z}$-axis. Then for some $M>1$, there exist
	constants $N_1=N_1(\kappa, M)$ and $\delta_1=\delta_1(\kappa, M)$, such that if $n\geq N_1$ and $x_n\leq\delta_1$, we have
	$$\mathcal{X}_n\geq M\mathcal{Z}_n \quad \text{and} \quad \frac{1}{M(M+1)}d\lambda_1^2x_n+O(x_n^2)>0,$$ where the remainder
	term $O(x_n^2)$ comes from the expansion of $x_{n+1}$.
	Consequently, it follows 
	$$x_n+z_n=\mathcal{X}_n\geq \frac{M}{M+1}(\mathcal{X}_n+\mathcal{Z}_n)=\frac{M}{M+1}x_n,$$ 
	and by the fact that $z_n\leq 0$ then
	\begin{equation}
	\label{ratio}
	\begin{aligned}
	\frac{x_n}{x_{n+1}}=&\frac{x_n}{d\lambda_1^2x_n+d(\lambda_1^2-\lambda_2^2)z_n+O(x_n^2)}
	\leq\frac{x_n}{\frac{M}{M+1}d\lambda_1^2x_n+O(x_n^2)}\\
	\leq&\frac{x_n}{\left(1-\frac1M\right)d\lambda_1^2x_n}
	=\frac{M}{M-1}\frac{1}{d\lambda_1^2}.
	\end{aligned}
	\end{equation}
	
	For fixed $k$, by the fact that $\frac14\lambda_3^4(\overline{x}_n-\overline{y}_n)^2$ can be bounded by $O(x_n^2)$ for the reason that $|\overline{x}_n|>|\overline{y}_n|$ implied in Lemma \ref{lemma1} (b) and (c),
	it is known from the dynamical system \eqref{eq:Z_dynamics} that
	\begin{eqnarray*}
		|x_{n+1}-(d\lambda_1^2\mathcal{X}_n+d\lambda_2^2\mathcal{Z}_n)|\leq Cx_n^2.
	\end{eqnarray*}
	Furthermore, one has
	$$
	x_{n+1}\leq(d\lambda_1^2\mathcal{X}_n+d\lambda_2^2\mathcal{Z}_n)+Cx_n^2\leq (d\lambda_1^2+Cx_n)x_n,
	$$
	and then there exists $\delta_2=\delta_2(\kappa, M,
	k)<\delta_1$, such that if $x_n<\delta_2$ then for any $1\leq\ell\leq k$ one has
	$x_{n+\ell}<2\delta_2$. Therefore, for any
	positive integer $k$,  equation \eqref{w} yields
	\begin{eqnarray*}
		\frac{u_{n+k}}{x_{n+k}}-\frac{1}{4}&=&d\lambda_1^2\lambda_2\frac{x_{n+k-1}}{x_{n+k}}\left(\frac{u_{n+k-1}}{x_{n+k-1}}-\frac{1}{4}\right)+O\left(x_{n+k-1}\frac{x_{n+k-1}}{x_{n+k}}\right)
		\\
		&=&(d\lambda_1^2\lambda_2)^k\left(\prod_{\ell=1}^k\frac{x_{n+\ell-1}}{x_{n+\ell}}\right)\left(\frac{u_n}{x_n}-\frac{1}{4}\right)+R,
	\end{eqnarray*}
	where, by equation \eqref{w} and with $C$ denoting the $O$ constant therein,
	$$
	|R|\leq
	2C\delta_2\left(\sum_{i=1}^k\left(\frac{M}{M-1}\frac{1}{d\lambda_1^2}\right)^i(d\lambda_1^2|\lambda_2|)^{i-1}\right)
	\leq\delta_2\frac{1-\left(\frac{M}{M-1}|\lambda_2|\right)^k}{1-\left(\frac{M}{M-1}|\lambda_2|\right)}\frac{M}{M-1}\frac{1}{d\lambda_1^2},
	$$
	and by equation \eqref{ratio}
	\begin{eqnarray*}
		(d\lambda_1^2\lambda_2)^k\left(\prod_{\ell=1}^k\frac{x_{n+\ell-1}}{x_{n+\ell}}\right)
		\leq(d\lambda_1^2|\lambda_2|)^k\left(\frac{M}{M-1}\frac{1}{d\lambda_1^2}\right)^k=
		\left(\frac{M}{M-1}|\lambda_2|\right)^k.
	\end{eqnarray*}

	Firstly, from Lemma \ref{lemma1} (a) one has
	$0\leq\frac{u_n}{x_n}\leq1$, which implies that
	$\left|\frac{u_n}{x_n}-\frac{1}{4}\right|< 1$. Secondly, by the fact that $|\lambda_2|\leq |\lambda_1|\leq d^{-1/2}\leq
	1/\sqrt{2}$, it is possible to achieve $\frac{M}{M-1}|\lambda_2|<1$
	by choosing $M=4$. Therefore, we can conclude that it is
	feasible to take $k=k(\varepsilon)$ sufficiently large and
	$\delta_3=\delta_3(\kappa, k, \varepsilon)=\delta_3(\kappa,
	\varepsilon)<\delta_2$ sufficiently small to guarantee that
	$$\left|\frac{u_{n+k}}{x_{n+k}}-\frac{1}{4}\right|<\varepsilon.$$
	Finally, under the condition that $|\lambda_2|\geq \varrho>0$, by Lemma \ref{ndtf}, we know that there exists
	$\gamma=\gamma(\varrho)$ such that
	$x_{n-k}\leq \gamma^{-k}x_n$. Thus, we can choose $N=N(\kappa,
	\varepsilon, k)=N(\kappa, \varepsilon)>N_1+k$ and
	$\delta=\gamma^k\delta_3$, such that if $x_n\leq\delta$ and $n\geq N$ then
	\begin{equation}
	\label{wnxn} \left|\frac{u_n}{x_n}-\frac{1}{4}\right|< \varepsilon.
	\end{equation}
 The second part of the lemma can be shown similarly as
	above.
\end{proof}

\section {Proof of the Main Theorem}
\label{Sec:Proof_of_Main_Theorem}

First, consider
$\varrho\leq |\lambda_2|\leq |\lambda_1|$ for any fixed $\varrho>0$.
To investigate the non-tightness, it would be convenient to assume that $1>d\lambda_1^2\geq d\lambda_2^2\geq\frac12$, say,
$|\lambda_1|\geq\frac{1}{\sqrt{2d}}$. We take $\varrho=\frac{1}{\sqrt{2d}}$ in the following context. Consider $|\lambda_2|>\varrho$ fixed and
just $\lambda_1$ varying, and without loss of generality,
assume $d\lambda_1^2>\frac{1+d\lambda_2^2}{2}$. Consequently choose
$\kappa=\kappa(d,
\lambda_2)=\left(\frac{1+d\lambda_2^2}{2d\lambda_2^2}\right)^{1/2}>1$
and thus $|\lambda_1|/|\lambda_2|\geq \kappa$.

By the definition of non-reconstruction in equation \eqref{equivalent}, it
suffices to show that when $d\lambda_1^2$ is close enough to
$1$, $\mathcal{X}_n$ does not converge to $0$ for the reason that it implies that $x_n$ does not
converge to $0$ considering $0\leq \mathcal{X}_n=x_n+z_n\leq x_n$. We apply reductio ad absurdum, by assuming that
\begin{equation}
\label{assumption}
\lim_{n\to\infty}x_n=\lim_{n\to\infty}\overline{x}_{n}=0.
\end{equation} 
Therefore, there exists $\mathcal{N}_1=\mathcal{N}_1(d)$, such that whenever $n>\mathcal{N}_1$, we have $x_n\leq\delta$. Next, recalling that $\mathcal{X}_n=x_n+\overline{z}_n$, we further define $\overline{\mathcal{X}}_n=\overline{x}_n+\overline{z}_n$. 
Then by the symmetry of the model, we can obtain the dynamical form for $\overline{\mathcal{X}}_n$ analogously as the dynamical form for $\mathcal{X}_n$ in equation \eqref{eq:Z_dynamics} :
$$
\overline{\mathcal{X}}_{n+1}=d\lambda_3^2\overline{\mathcal{X}}_n+\frac{d(d-1)}{2}\left(-4\lambda_3^4\overline{\mathcal{X}}_n^2+8\lambda_3^2\lambda_2^2\overline{\mathcal{X}}_n\mathcal{Z}_n\right)
+R_{\overline{x}}+R_z+V_{\overline{x}}
$$
where $R_{\overline{x}}$ and $V_{\overline{x}}$ are counterparts of $R_x$ and $V_x$ simply by replacing $x$ by $\overline{x}$.

Then we display the discussion in the $\mathcal{X}O\overline{\mathcal{X}}$ plane. Since $|\lambda_1|>|\lambda_3|$ and $\mathcal{X}_n, \overline{\mathcal{X}}_n\to 0$ as $n\to \infty$ from equation \eqref{assumption}, in a small neighborhood of $(0, 0)$, the discrete trajectory approaches the origin point in a way that is 
``tangential" to the $\mathcal{X}$-axis. Furthermore, the conclusion of Lemma \ref{non0} excludes the possibility that the trajectory moves along the $\overline{\mathcal{X}}$-axis. Therefore, it implies that there exists $\mathcal{N}=\mathcal{N}(d)>\mathcal{N}_1$, such that whenever $n>\mathcal{N}$,
\begin{equation}
\label{newinequality}
\overline{\mathcal{X}}_n\leq \mathcal{X}_n,\quad \text{that is,}\quad \overline{x}_n\leq x_n.
\end{equation}

From the proof of Lemma \ref{unconcentration}, we know that
in the $\mathcal{X}O\mathcal{Z}$ plane there exist
$N=N(\kappa, \varrho)>\mathcal{N}$ and $\delta=\delta(d, \kappa,
\varrho)>0$, such that if $n\geq N$ and $x_n\leq\delta$, then in the
small neighborhood of $(0, 0)$, we have
\begin{equation}
\label{6.1}
\mathcal{X}_n\geq 4\mathcal{Z}_n \quad\textup{that is,}\quad
\mathcal{\mathcal{X}}_n\geq\frac45x_n
\end{equation} 
By equation \eqref{newinequality}, applying Lemma~\ref{R}, and taking $\varepsilon=\frac{4}{25}\frac{d(d-1)}4\lambda_1^4$, one can obtain
$$|R_z|\leq\frac{4}{25}\frac{d(d-1)}4\lambda_1^4x_n^2\leq
\frac{1}{4}\frac{d(d-1)}{4}\lambda_1^4\mathcal{X}_n^2.$$
Next by the result of Lemma \ref{unconcentration} that $\left|\frac{u_n}{x_n}-\frac{1}{4}\right|<\varepsilon'$ and $\left|\frac{w_n}{x_n}-\frac{1}{4}\right|<\varepsilon'$  for any $\varepsilon'>0$, now we take 
$\varepsilon'=\frac{1}{12C_V}\frac{d(d-1)}4\lambda_1^4.$
Therefore, by equation \eqref{eq:Z_dynamics} and the condition that $\lambda_1\geq \lambda_2$, we have
\begin{eqnarray}
\label{major}
\mathcal{Z}_{n+1} &=&
d\lambda_2^2\mathcal{Z}_n+\frac{d(d-1)}{2}\left[\lambda_1^4\mathcal{X}_n^2-8\lambda_2^4\mathcal{Z}_n^2+\frac14\lambda_3^4(\overline{x}_n-\overline{y}_n)^2\right]
-R_z+V_z\nonumber
\\
&\geq&	d\lambda_2^2\mathcal{Z}_n+\frac{d(d-1)}{2}\left[\lambda_1^4\mathcal{X}_n^2-8\lambda_2^4\mathcal{Z}_n^2\right]
-R_z+V_z\nonumber
\\
&\geq&	d\lambda_2^2\mathcal{Z}_n+\frac{d(d-1)}{2}\left[\frac12\lambda_1^4\mathcal{X}_n^2+\frac12\lambda_1^416\mathcal{Z}_n^2-8\lambda_2^4\mathcal{Z}_n^2\right]
-R_z+V_z
\\
&\geq&d\lambda_2^2\mathcal{Z}_n+\frac{d(d-1)}4\lambda_1^4\mathcal{X}_n^2-|R_z|
-C_Vx_n^2\left(\left|\frac{u_n}{x_n}-\frac{1}{4}\right|+\left|\frac{w_n}{x_n}-\frac{1}{4}\right|+x_n\right),\nonumber
\\
&\geq&d\lambda_2^2\mathcal{Z}_n+\frac12\frac{d(d-1)}{4}\lambda_1^4\mathcal{X}_n^2,\nonumber
\\
&\geq&\mathcal{Z}_n\left[d\lambda_2^2+\frac{d(d-1)}{2}\lambda_1^4\mathcal{X}_n\right].\nonumber
\end{eqnarray}

Note that the initial point $x_0=1-\frac14=\frac34>0$ and Lemma~\ref{ndtf} implies that there exists $\gamma=\gamma(\varrho, \mathcal{N})=\gamma(d)$ such that
$ x_n\geq x_0\gamma^n$. Define $\varepsilon=\varepsilon(d)=\left(\frac{x_0\gamma^{N}}{10}\right)^2> 0$. Because $\varepsilon$ is independent of
$\lambda_1$, considering that $d\lambda_2^2$ sufficiently close to $1$, we can choose $|\lambda_1|<d^{-1/2}$ such that
\begin{equation}
\label{inequality}
d\lambda_2^2+\frac{d(d-1)}{2}\lambda_1^4\varepsilon>1.
\end{equation}
Noting that $\frac{d(d-1)}{2}\lambda_1^4\geq\left(\frac{d\lambda_1^2}2\right)^2\geq\frac1{16}$, equation \eqref{major} implies that 
$$
\mathcal{Z}_{N+1}\geq
\frac12\frac{d(d-1)}{4}\lambda_1^4\mathcal{X}_N^2\geq \frac14\frac1{16}\frac{16}{25}x_n^2\geq\left(\frac{x_0\gamma^N}{10}\right)^2=
\varepsilon.
$$ 
Suppose $\mathcal{Z}_n\geq \varepsilon$ for some
$n> N$, and it follows from equations \eqref{major}
and \eqref{inequality} that
\begin{eqnarray*}
	x_{n+1}\geq \mathcal{Z}_{n+1} \geq
	\mathcal{Z}_n\left[d\lambda_2^2+\frac{d(d-1)}{2}\lambda_1^4\varepsilon\right]
	> \mathcal{Z}_n \geq\varepsilon.
\end{eqnarray*}
Therefore, by induction we have $x_n\geq \mathcal{Z}_n \geq\varepsilon$ for all $n>N$,
which contradicts to the assumption imposed in equation \eqref{assumption}. Thus, the proof is completed.

\bibliographystyle{siamplain}
\bibliography{references}
\end{document}